\theoremstyle{definition}
\newtheorem{theorem}{Theorem}
\newtheorem{lemma}{Lemma}
\newcommand\V[1]  { \mathbf{#1} }
\newcommand\B[1]  { \boldsymbol{#1} }
\newcommand\up[1] {\mathrm{#1}}
\newcommand\set[1] {\mathcal{#1}}
\acrodef{LPC}{linear probabilistic classifier}
\acrodef{RV}{random variable}
\acrodef{ERM}{empirical risk minimization}
\acrodef{RRM}{robust risk minimization}
\acrodef{SVM}{support vector machine}
\acrodef{ANN}{artificial neural network}
\acrodef{RKHS}{reproducing kernel Hilbert space}
\acrodef{MEM}{maximum entropy machine}
\acrodef{DT}{decision tree}
\acrodef{QDA}{quadratic discriminant analysis}
\acrodef{KNN}{k-nearest neighbor} 
\acrodef{RF}{random forest}
\acrodef{MLP}{multilayer perceptron}
\acrodef{MRC}{minimax risk classifier}
\acrodef{LR}{logistic regression}
\acrodef{LUSI}{learning using statistical invariants}
\acrodef{ACSC}{adversarial cost-sensitive classifier}
\acrodef{AMC}{adversarial multiclass classifier}
\title{Minimax Classification with 0-1 Loss \\and Performance Guarantees}
\author{%
Santiago Mazuelas \\
  BCAM-Basque Center of Applied Mathematics\\ and IKERBASQUE-Basque Foundation for Science\\
  Bilbao, Spain\\
  \texttt{smazuelas@bcamath.org} \\
  \And
  Andrea Zanoni\\
  \'{E}cole Polytechnique F\'{e}d\'{e}rale de Lausanne\\  
  Lausanne, Switzerland\\
  \texttt{andrea.zanoni@epfl.ch}
   \And
 Aritz P\'{e}rez\\
  BCAM-Basque Center of Applied Mathematics\\
  Bilbao, Spain\\
  \texttt{aperez@bcamath.org} \\
  }
\begin{document}
\maketitle
\begin{abstract}
Supervised classification techniques use training samples to find classification rules with small expected 0-1 loss. Conventional methods achieve efficient learning and out-of-sample generalization by minimizing surrogate losses over specific families of rules. This paper presents minimax risk classifiers (MRCs) that do not rely on a choice of surrogate loss and family of rules. MRCs achieve efficient learning and out-of-sample generalization by minimizing worst-case expected 0-1 loss w.r.t. uncertainty sets that are defined by linear constraints and include the true underlying distribution. In addition, MRCs' learning stage provides performance guarantees as lower and upper tight bounds for expected 0-1 loss. We also present MRCs' finite-sample generalization bounds in terms of training size and smallest minimax risk, and show their competitive classification performance w.r.t. state-of-the-art techniques using benchmark datasets.
\end{abstract}
\section{Introduction}
Supervised classification techniques use training samples to find classification rules that assign labels to instances with small expected 0-1 loss, also referred to as risk or probability of error. Most learning methods utilize \ac{ERM} approach that minimizes the expectation w.r.t. the empirical distribution of training samples, see e.g., \cite{Vap:98,EvgPonPog:00}. Other methods utilize \ac{RRM} approach that minimizes the worst-case expectation w.r.t. an uncertainty set of distributions obtained using metrics such as moments' fits, divergences, and Wasserstein distances, see e.g., \cite{LanGhaBhaJor:02,LeeRag:18}. Common uncertainty sets are formed by distributions with instances' marginal supported on the training samples  \cite{FarTse:16,AsiXinBeh:15,FatAnq:16,DucGlyNam:16,NamDuc:17}. However, more general uncertainty sets, such as those used in  \cite{LanGhaBhaJor:02,DelYe:10,AbaMohKuh:15,ShaKuhMoh:17,LeeRag:18}, can include the true underlying distribution with a tuneable confidence. Out-of-sample generalization is conventionally achieved by considering families of rules with favorable properties (reduced VC dimension or Rademacher complexity \cite{Vap:98,MehRos:18}). However, \ac{RRM} techniques can directly achieve out-of-sample generalization by using uncertainty sets that include the true underlying distribution. In addition, such uncertainty sets can enable to obtain tight performance bounds at learning.

Conventional methods achieve efficient learning and out-of-sample generalization by minimizing surrogate losses over families of rules with favorable properties. \ac{ERM}-based techniques such as \acp{SVM},  \acp{MLP}, and Adaboost classifiers consider loss functions such as hinge loss, cross-entropy loss, and exponential loss together with families of classification rules obtained from \acp{RKHS}, artificial neural networks, and combinations of weak rules. \ac{RRM}-based techniques that utilize Wasserstein distances consider surrogate log loss and linear functions or \acp{RKHS} \cite{AbaMohKuh:15,ShaKuhMoh:17}, while those that utilize f-divergences can use more general surrogate losses and parametric families of rules as long as they result in convex functions over parameters \cite{DucGlyNam:16,NamDuc:17}. Certain techniques based on \ac{RRM} do not rely on surrogate losses and minimize worst-case 0-1 expected loss \cite{AsiXinBeh:15,FarTse:16,FatAnq:16}. However, such works consider uncertainty sets that do not include the true underlying distribution. Hence, their generalization guarantees rely on the usage of specific families of rules, and they do not provide performance bounds at learning.

This paper presents \ac{RRM}-based classification techniques referred to as \acp{MRC} that minimize worst-case expected 0-1 loss over general classification rules, and provide tight performance bounds at learning.   
Specifically, the main results presented in the paper are as follows.\vspace{-0.2cm}
\begin{itemize}[leftmargin=0.7cm]\itemsep0em
\item Learning techniques that determine \acp{MRC} as the solution of a linear optimization problem (Theorem~\ref{th1} in Section~\ref{sec-2}, and Algorithm~\ref{codes1} in Section~\ref{sec-4}). 
\item Techniques that provide performance guarantees at learning as lower and upper tight bounds for expected 0-1 loss (Theorem~\ref{th1} in Section~\ref{sec-2}, Theorem~\ref{prop} in Section~\ref{sec-3}, and Algorithm~\ref{codes1} in Section~\ref{sec-4}). 
\item Finite-sample generalization bounds for \acp{MRC} in terms of training size and smallest minimax risk
(Theorem~\ref{th-bounds} in Section~\ref{sec-3}).
\end{itemize}
Detailed comparisons with related techniques are provided in the remarks to the paper's main new results. In addition, Section~\ref{sec-4} provides a detailed description of  \acp{MRC}' implementation, and Section~\ref{sec-5} shows the suitability of the performance bounds and compares the classification error of \acp{MRC} w.r.t. state-of-the-art techniques.

\emph{Notation:} calligraphic upper case letters denote sets; 
vectors and matrices are denoted by bold lower and upper case letters, respectively; for a vector $\V{v}$, $v^{(l)}$ denotes its 
$l$-th component, and $\V{v}^{\text{T}}$ and $\V{v}_+$ denote its transpose and positive part, respectively;  probability distributions and classification rules are denoted by upright fonts, e.g., $\up{p}$ and $\up{h}$; 
$\mathbb{E}_{\up{p}}\{\cdot\}$ denotes expectation w.r.t. probability distribution $\up{p}$; $\mathbb{I}\{\cdot\}$ denotes the indicator function; $\preceq$ and $\succeq$ denote vector (component-wise) inequalities; $\V{1}$ denotes a vector with all components equal to $1$;  $\V{1}_{\set{C}}$ an indicator vector with $j$-th component equal to $1$ (resp. $0$) if $j\in\set{C}$ (resp. $j\notin\set{C}$); $|\set{Z}|$ denotes de cardinality of set $\set{Z}$;  
and, for a finite set $\set{Z}$, we denote by $\Delta(\set{Z})$ the set of probability distributions with support $\set{Z}$.
\section{Minimax-risk classification}\label{sec-2}
This section first briefly recalls the problem statement and learning approaches for supervised classification, and then presents learning techniques for \acp{MRC}.
\subsection{Problem formulation and learning approaches}
Supervised classification uses training samples formed by instance-label pairs to determine classification rules that assign labels to instances.  In what follows, we denote by $\set{X}$ and $\set{Y}$ the sets of possible instances and labels, respectively; both sets are taken to be finite and we represent $\set{Y}$ by $\{1,2,\ldots,|\set{Y}|\}$. Commonly, the cardinality of $\set{X}$ is very large compared 
with that of $\set{Y}$; for instance, in hand-written digit classification with 28x28 pixels grayscale images, $|\set{X}|=256^{784}$ and $|\set{Y}|=10$. 

Classification rules can be deterministic or non-deterministic. For a specific instance, a deterministic classification rule assigns always the same label, while a non-deterministic classification rule is allowed to randomly assign a label with certain probability. Both types of rules can be represented by the probabilities with which labels are assigned to instances ($0$ or $1$ probabilities for the deterministic case). 
We denote by $ T(\set{X},\set{Y})$ the set of general classification rules; if $\up{h}\in T(\set{X},\set{Y})$ we denote by $\up{h}(y|x)$ the probability with which $\up{h}$ assigns label $y\in\set{Y}$ to instance $x\in\set{X}$. In addition, we denote by $\Delta(\set{X}\times\set{Y})$ the set of probability distributions on $\set{X}\times\set{Y}$; if $\up{p}\in\Delta(\set{X}\times\set{Y})$ we denote by $\up{p}(x,y)$ the probability assigned by $\up{p}$ to the instance-label pair $(x,y)$, and by $\up{p}(x)$ the marginal probability assigned by $\up{p}$ to the instance $x$, i.e., $\up{p}(x)=\underset{y\in\set{Y}}{\sum}\up{p}(x,y)$.

The $0$-$1$ loss (also called just loss in the following) of a classification rule at the instance-label pair $(x,y)\in\set{X}\times\set{Y}$ quantifies classification error, that is, the loss is $0$ if the classification rule assigns label $y$ to instance $x$, and is  $1$ otherwise. Hence, the expected loss of a classification rule $\up{h}\in T(\set{X},\set{Y})$ at $(x,y)$ is $1-\up{h}(y|x)$, and its expected loss w.r.t. a probability distribution $\up{p}\in\Delta(\set{X}\times\set{Y})$ is $$\ell(\up{h},\up{p})=\sum_{x\in\set{X},y\in\set{Y}}\up{p}(x,y)(1-\up{h}(y|x)).$$ 
Let $\up{p}^*$ be the unknown true underlying distribution of instance-label pairs, the risk of a classification rule $\up{h}$ (denoted $R(\up{h})$) is its expected loss w.r.t. $\up{p}^*$, that is
 $R(\up{h})=\ell(\up{h},\up{p}^*)$. The minimum risk is known as Bayes risk and becomes $$R_{\text{Bayes}}= 1-\sum_{x\in\set{X}}\max_{y\in\set{Y}}\up{p}^*(x,y)$$ since it is achieved by Bayes' rule $\up{h}_{\text{Bayes}}$ that assigns the most probable label to each instance. 
 
\ac{ERM} approach for supervised classification aims to minimize the empirical expected loss $\ell(\up{h},\up{p}_n)$, where $\up{p}_n$ is the empirical distribution of training samples. \ac{RRM} approach aims to minimize the worst-case expected loss $\ell(\up{h},\up{p})$ for $\up{p}$ a probability distribution in an uncertainty set obtained from training samples. As described above, conventional techniques enable efficient \ac{ERM} and \ac{RRM} by using surrogate loss functions and considering specific families of classification rules. 

Supervised classification techniques can be seen as methods that perform the approximation 
\begin{align*}
\underset{\up{h}\in T(\set{X},\set{Y})}{\min}\ell(\up{h},\up{p}^*)\longrightarrow\,\underset{\up{h}\in\set{F}}{\min}\max_{\up{p}\in\set{U}}\widetilde{\ell}(\up{h},\up{p})
\end{align*}
where the original 0-1 loss $\ell$ is substituted by a surrogate loss $\widetilde{\ell}$; classification rules are restricted to a specific family $\set{F}\subseteq T(\set{X},\set{Y})$; and expectation w.r.t. the true underlying distribution $\up{p}^*$ is approximated by the worst-case expectation w.r.t. distributions in an uncertainty set $\set{U}$. \ac{ERM}-based techniques correspond to the case where the uncertainty set contains only the empirical distribution, while \ac{RRM}-based techniques use uncertainty sets that contain multiple distributions. Using 0-1 loss and uncertainty sets that include the true underlying distribution, the objective minimized at learning $\max_{\up{p}\in\set{U}}\ell(\up{h},\up{p})$ becomes an upper bound of the original objective $\ell(\up{h},\up{p}^*)$ for any classification rule $\up{h}\in T(\set{X},\set{Y})$. This key property can enable to ensure out-of-sample generalization and to obtain tight performance bounds at learning.


\subsection{Learning \acp{MRC}}
The following shows how \ac{RRM} can be used with original 0-1 loss $\ell$, considering general classification rules $ T(\set{X},\set{Y})$, and using uncertainty sets that include the true underlying distribution $\up{p}^*$ with a tuneable confidence. 

\acp{MRC} consider uncertainty sets of distributions defined by linear constraints obtained from expectation estimates of a feature mapping. Specifically, let $\Phi:\set{X}\times\set{Y}\to\mathbb{R}^m$ be a feature mapping, and $\V{a},\V{b}\in\mathbb{R}^m$ with $\V{a}\preceq\V{b}$ be lower and upper endpoints of interval estimates for the expectation of $\Phi$. We consider uncertainty sets of distributions 
\begin{align}\label{uncertainty}\set{U}^{\V{a},\V{b}}=\big\{\up{p}\in\Delta(\set{X}\times\set{Y}):\  \V{a}\preceq\mathbb{E}_{\up{p}}\{\Phi(x,y)\}\preceq \V{b}\big\}\end{align}
and we denote the minimax expected loss against uncertainty set $\set{U}^{\V{a},\V{b}}$ by $R^{\V{a},\V{b}}$, i.e.,\begin{align*}R^{\V{a},\V{b}}=\min_{\up{h}\in T(\set{X},\set{Y})}\,\max_{\up{p}\in\set{U}^{\V{a},\V{b}}}\ell(\up{h},\up{p}).\end{align*}

Such uncertainty sets include the true underlying distribution $\up{p}^*$ with probability at least $1-\delta$ as long as $\V{a}$ and $\V{b}$ define expectations' confidence intervals at level $1-\delta$, that is $$\mathbb{P}\{\V{a}\preceq\mathbb{E}_{\up{p}^*}\{\Phi(x,y)\}\preceq\V{b}\}\geq 1-\delta.$$ In this paper, we consider expectations' interval estimates obtained from empirical expectations of training samples $(x_1,y_1),(x_2,y_2),\ldots,(x_n,y_n)$ 
as 
\begin{align}\label{interval} &\V{a}_n=\B{\tau}_n-\frac{\B{\lambda}}{\sqrt{n}},\  \V{b}_n=\B{\tau}_n+\frac{\B{\lambda}}{\sqrt{n}},\  \mbox{for }\B{\tau}_n=\frac{1}{n}\sum_{i=1}^n\Phi(x_{i},y_{i})\end{align}
where $\B{\lambda}\succeq \V{0}$ determines the size of the interval estimates for different confidence levels.



In the following, in order to get compact expressions we often denote functions with domain $\set{X}\times\set{Y}$ by vectors or matrices with $|\set{X}||\set{Y}|$ components or rows, respectively. We denote a probability distribution $\up{p}\in\Delta(\set{X}\times\set{Y})$ and a classification rule $\up{h}\in T(\set{X},\set{Y})$ by vectors $\V{p}$ and $\V{h}$ with components given by $\up{p}(x,y)$ and $\up{h}(y|x)$ for $(x,y)\in\set{X}\times\set{Y}$. In addition, we denote the feature mapping $\Phi:\set{X}\times\set{Y}\to\mathbb{R}^m$ by a matrix $\B{\Phi}$ with rows given by $\Phi(x,y)^{\text{T}}$ for $(x,y)\in\set{X}\times\set{Y}$. Also, we denote by $\V{p}_x$, $\V{h}_x$, and $\B{\Phi}_x$ the subvectors and submatrix of $\V{p}$, $\V{h}$, and $\B{\Phi}$ corresponding to a fixed $x\in\set{X}$, and if $\V{v}$ is a vector indexed by $\set{X}\times\set{Y}$ we denote by $\|\V{v}\|_{1,\infty}$ and $\|\V{v}\|_{\infty,1}$ the mixed norms $\|\V{v}\|_{1,\infty}=\max_{x\in\set{X}}\|\V{v}_{x}\|_1$ and $\|\V{v}\|_{\infty,1}=\sum_{x\in\set{X}}\|\V{v}_{x}\|_\infty$. With this vector notation we have that 
\begin{align*}\ell(\up{h},\up{p})=\V{p}^{\text{T}}(\V{1}-\V{h}),\  
\min_{\up{h}\in T(\set{X},\set{Y})}&\ell(\up{h},\up{p})=1-\|\V{p}\|_{\infty,1},\  
\mbox{ and }\mathbb{E}_{\up{p}}\{\Phi(x,y)\}=\B{\Phi}^{\text{T}}\V{p}.\end{align*}
Finally, whenever we use expectation point estimates, i.e., $\V{a}=\V{b}$, we drop $\V{b}$ from the superscripts, for instance we denote $\set{U}^{\V{a},\V{b}}$ for $\V{a}=\V{b}$ as $\set{U}^{\V{a}}$. 

The result below determines minimax classification rules with 0-1 loss against uncertainty sets given by \eqref{uncertainty}, which are referred to as \acp{MRC} in the following. 
\begin{theorem}\label{th1}
Let $\Phi:\set{X}\times\set{Y}\to\mathbb{R}^m$, $\V{a}, \V{b}\in\mathbb{R}^{m}$ with $\set{U}^{\V{a},\V{b}}\neq\emptyset$, and $\B{\mu}_{a}^*,\B{\mu}_{b}^*$, $\nu^*$ be a solution of the convex optimization problem
\begin{align}\label{learning-ineq}\begin{array}{cl}\underset{\B{\mu}_{a},\B{\mu}_{b}\in\mathbb{R}^{m},\nu\in\mathbb{R}}{\min}&\V{b}^{\text{T}}\B{\mu}_{b}-\V{a}^{\text{T}}\B{\mu}_{a}-\nu\\
\mbox{s. t.}&\|(\B{\Phi}(\B{\mu}_{a}-\B{\mu}_{b})+(\nu+1)\V{1})_+\|_{1,\infty}\leq 1\\
&\B{\mu}_{a},\B{\mu}_{b}\succeq \V{0}. \end{array}\end{align}
If a classification rule
$\up{h}^{\V{a},\V{b}}\in\Delta(X,Y)$ satisfies, for each $x\in\set{X},y\in\set{Y}$,
\begin{align}\label{robust-act}\up{h}^{\V{a},\V{b}}(y|x)\geq \Phi(x,y)^{\text{T}}\B{\mu}^*+\nu^*+1\end{align}
with $\B{\mu}^*=\B{\mu}_{a}^*-\B{\mu}_{b}^*$,
then $$\up{h}^{\V{a},\V{b}}\in\arg\,\min_{\up{h}\in\Delta(X,Y)}\,\max_{\up{p}\in\set{U}^{\V{a},\V{b}}}\ell(\up{h},\up{p})$$
that is, $\up{h}^{\V{a},\V{b}}$ is a minimax classification rule for 0-1 loss against uncertainty set $\set{U}^{\V{a},\V{b}}$. In addition, the minimax expected loss against uncertainty set $\set{U}^{\V{a},\V{b}}$ is given by \begin{align}\label{upper}R^{\V{a},\V{b}}=\V{b}^{\text{T}}\B{\mu}_{b}^*-\V{a}^{\text{T}}\B{\mu}_{a}^*-\nu^*.\end{align}
\end{theorem}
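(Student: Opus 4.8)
The plan is to prove the two claims---the value \eqref{upper} and the optimality of any rule satisfying \eqref{robust-act}---by combining a minimax interchange with linear-programming duality.

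First I would reduce the minimax problem to a single minimization. Since $\ell(\up{h},\up{p})=\V{p}^{\text{T}}(\V{1}-\V{h})$ is bilinear, and the set of rules $T(\set{X},\set{Y})$ (a product of probability simplices over $y$, hence convex and compact) together with the uncertainty set $\set{U}^{\V{a},\V{b}}$ (a nonempty compact polytope) satisfy the hypotheses of the minimax theorem, we have $\min_{\up{h}}\max_{\up{p}}\ell=\max_{\up{p}}\min_{\up{h}}\ell$. Using the stated identity $\min_{\up{h}}\ell(\up{h},\up{p})=1-\|\V{p}\|_{\infty,1}$, this yields $R^{\V{a},\V{b}}=1-\min_{\up{p}\in\set{U}^{\V{a},\V{b}}}\|\V{p}\|_{\infty,1}$.

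Next comes the main work: express $\min_{\up{p}\in\set{U}^{\V{a},\V{b}}}\|\V{p}\|_{\infty,1}$ as a linear program and dualize. Introducing epigraph variables $t_x$ with $p(x,y)\le t_x$, the problem becomes minimize $\sum_{x}t_x$ subject to $\V{a}\preceq\B{\Phi}^{\text{T}}\V{p}\preceq\V{b}$, $\V{1}^{\text{T}}\V{p}=1$, and $\V{p}\succeq\V{0}$. I would attach multipliers $\B{\mu}_{a},\B{\mu}_{b}\succeq\V{0}$ to the two moment inequalities, $\nu$ to the normalization, $\B{\alpha}\succeq\V{0}$ to the epigraph constraints, and $\B{\gamma}\succeq\V{0}$ to $\V{p}\succeq\V{0}$. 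Minimizing the Lagrangian over the free variable $\V{t}$ forces $\sum_{y}\alpha_{x,y}=1$ for each $x$, so $\B{\alpha}$ is itself a classification rule; minimizing over $\V{p}$ forces $\gamma_{x,y}=\alpha_{x,y}-\Phi(x,y)^{\text{T}}(\B{\mu}_{a}-\B{\mu}_{b})+\nu$, while the constant part of the dual objective is $\V{a}^{\text{T}}\B{\mu}_{a}-\V{b}^{\text{T}}\B{\mu}_{b}-\nu$. Eliminating $\B{\gamma}\succeq\V{0}$ and the simplex variables $\B{\alpha}$---using that $\alpha_{x,y}\ge0$, $\alpha_{x,y}\ge\Phi(x,y)^{\text{T}}(\B{\mu}_{a}-\B{\mu}_{b})-\nu$, and $\sum_{y}\alpha_{x,y}=1$ are jointly feasible iff $\sum_{y}(\Phi(x,y)^{\text{T}}(\B{\mu}_{a}-\B{\mu}_{b})-\nu)_+\le1$---turns the dual into a maximization of $\V{a}^{\text{T}}\B{\mu}_{a}-\V{b}^{\text{T}}\B{\mu}_{b}-\nu$ subject to a mixed $\|\cdot\|_{1,\infty}$ constraint. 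Both programs being feasible, strong duality applies; after the substitution that sends $\nu$ to $-(\nu+1)$ I would match this dual exactly to \eqref{learning-ineq} and read off $R^{\V{a},\V{b}}=1-(\text{dual value})=\V{b}^{\text{T}}\B{\mu}_{b}^*-\V{a}^{\text{T}}\B{\mu}_{a}^*-\nu^*$, which is \eqref{upper}. The extra $+1$ and the sign of $\nu$ in the theorem are precisely the fingerprints of this reparametrization, and getting that bookkeeping right is where I expect the main difficulty to lie.

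Finally, for the optimality of any rule $\up{h}$ satisfying \eqref{robust-act}, I would argue directly rather than through complementary slackness. For every $\up{p}\in\set{U}^{\V{a},\V{b}}$ write $\ell(\up{h},\up{p})=1-\V{p}^{\text{T}}\V{h}$ and substitute the lower bound \eqref{robust-act} on each $\up{h}(y|x)$; since $\V{p}$ is a probability distribution the $(\nu^*+1)$ term contributes $\nu^*+1$, while the feature term equals $(\B{\Phi}^{\text{T}}\V{p})^{\text{T}}\B{\mu}^*$, which by $\V{a}\preceq\B{\Phi}^{\text{T}}\V{p}\preceq\V{b}$ together with $\B{\mu}_{a}^*,\B{\mu}_{b}^*\succeq\V{0}$ is at least $\V{a}^{\text{T}}\B{\mu}_{a}^*-\V{b}^{\text{T}}\B{\mu}_{b}^*$. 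This gives $\ell(\up{h},\up{p})\le\V{b}^{\text{T}}\B{\mu}_{b}^*-\V{a}^{\text{T}}\B{\mu}_{a}^*-\nu^*=R^{\V{a},\V{b}}$ for all $\up{p}$, hence $\max_{\up{p}}\ell(\up{h},\up{p})\le R^{\V{a},\V{b}}$; the reverse inequality is immediate from the definition of $R^{\V{a},\V{b}}$ as a minimum over rules, so equality holds and $\up{h}$ is minimax optimal. I would also note that the constraint in \eqref{learning-ineq} guarantees $\sum_{y}(\Phi(x,y)^{\text{T}}\B{\mu}^*+\nu^*+1)_+\le1$, so a valid rule satisfying \eqref{robust-act} exists and the hypothesis of the theorem is not vacuous.
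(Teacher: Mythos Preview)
Your proposal is correct and arrives at both conclusions by a route that overlaps with the paper's in its skeleton (minimax swap, then duality) but differs in two substantive ways.

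For the value \eqref{upper}, the paper computes the Lagrange dual of $\max_{\up{p}\in\set{U}^{\V{a},\V{b}}}\big(\V{p}^{\text{T}}\V{1}-\|\V{p}\|_{\infty,1}\big)$ using conjugate functions: it invokes an auxiliary lemma (Lemma~\ref{lemma-conjugate}) that computes the Fenchel conjugate of $\V{p}\mapsto\|\V{p}\|_{\infty,1}-\V{1}^{\text{T}}\V{p}+I_+(\V{p})$, which in turn relies on the duality of $\|\cdot\|_{\infty,1}$ and $\|\cdot\|_{1,\infty}$ (Lemma~\ref{lemma-dual}). Your epigraph linearization, by contrast, never needs these lemmas---the $\|\cdot\|_{1,\infty}$ constraint emerges automatically from the condition $\sum_y\alpha_{x,y}=1$, $\alpha_{x,y}\ge 0$, $\alpha_{x,y}\ge\Phi(x,y)^{\text{T}}(\B{\mu}_a-\B{\mu}_b)-\nu$. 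Your derivation is more elementary but more bookkeeping-heavy; the paper's is cleaner once the conjugate is known, but requires two preparatory lemmas.

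For the optimality of any $\up{h}$ satisfying \eqref{robust-act}, the difference is larger. The paper introduces an auxiliary relaxed set $\widetilde{\set{U}}=\{\up{p}:\V{p}\succeq\V{0},\ \|\V{p}\|_{1,\infty}\le 1\}$ and a penalized loss $\widetilde{\ell}$, shows that $\up{h}^{\V{a},\V{b}}$ solves the separable problem $\min_{\up{h}}\max_{\up{p}\in\widetilde{\set{U}}}\widetilde{\ell}(\up{h},\up{p})$, and then closes a chain of inequalities linking this auxiliary value back to $R^{\V{a},\V{b}}$ via strong duality. Your one-line weak-duality argument---plug \eqref{robust-act} into $\ell(\up{h},\up{p})=1-\V{p}^{\text{T}}\V{h}$, use $\V{a}\preceq\B{\Phi}^{\text{T}}\V{p}\preceq\V{b}$ and $\B{\mu}_a^*,\B{\mu}_b^*\succeq\V{0}$ to bound the moment term, and conclude $\max_{\up{p}}\ell(\up{h},\up{p})\le R^{\V{a},\V{b}}$---is considerably shorter and avoids the auxiliary construction entirely. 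The paper's route does give extra structural information (it identifies the relaxed saddle problem and shows it has the same value), which may be useful elsewhere, but for the theorem as stated your direct argument is both sufficient and more transparent.
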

\begin{proof}
See Appendix~\ref{proof-th1} in the supplementary material.
\end{proof}
The result above is obtained by using von Neumann's minimax theorem \cite{GruDaw:04} and Lagrange duality \cite{BoyVan:04}; in particular, parameters $\B{\mu}_{a}^*,\B{\mu}_{b}^*$, $\nu^*$ correspond to the Lagrange multipliers of constraints in \eqref{uncertainty}. 
As we describe in Section~\ref{sec-4}, Theorem~\ref{th1} enables \acp{MRC}' implementation in practice. Specifically, training samples serve to obtain expectation estimates $\V{a}$ and $\V{b}$ that are used to learn parameters $\B{\mu}^*,\nu^*$ by solving \eqref{learning-ineq}, which is equivalent to a linear optimization problem. Then,  those parameters are used in the prediction stage to assign label $y\in\set{Y}$ to instance $x\in\set{X}$ with probability $\up{h}^{\V{a},\V{b}} (y|x)$ satisfying \eqref{robust-act}. Even though \acp{MRC} minimize the worst-case risk over all possible rules; as shown in \eqref{robust-act}, they have a specific parametric form determined by a linear-affine combination of the feature mapping with coefficients obtained by solving \eqref{learning-ineq} at learning.  Therefore, the role of the feature mapping in the presented method is similar to that in conventional techniques such as \ac{SVM} and logistic regression.

Classification rules satisfying \eqref{robust-act} always exist since $\sum_{y\in\set{Y}}(\Phi(x,y)^{\text{T}}\B{\mu}^*+\nu^*+1)_+\leq 1$ for any $x\in\set{X}$ due to the constraints in \eqref{learning-ineq}.  In addition, in case of using expectation point estimates, i.e., $\V{a}=\V{b}$, the minimization solved at learning becomes \begin{align}\label{learning-eq}\begin{array}{cl}\underset{\B{\mu}\in\mathbb{R}^{m},\nu\in\mathbb{R}}{\min}&-\V{a}^{\text{T}}\B{\mu}-\nu\\
\mbox{s. t.}&\|(\B{\Phi}\B{\mu}+(\nu+1)\V{1})_+\|_{1,\infty}\leq 1\end{array}\end{align}
taking $\B{\mu}=\B{\mu}_a-\B{\mu}_b$.


The techniques proposed in  \cite{AsiXinBeh:15,FarTse:16,FatAnq:16} find minimax classification rules with 0-1 loss for uncertainty sets that are also defined in terms of expectations' fits. In particular, \cite{AsiXinBeh:15,FatAnq:16} utilize uncertainty sets of the form
\begin{align*}\set{U}=\big\{\up{p}\in T(\set{X},\set{Y}): \mathbb{E}_{\up{p}}\{\Phi(x,y)\}=\V{a}, \mbox{ and }\up{p}(x)=\up{p}_n(x),\  \forall x\in\set{X}\big\}\end{align*}
while \cite{FarTse:16} utilizes uncertainty sets of the form
\begin{align*}\set{U}=\big\{\up{p}\in T(\set{X},\set{Y}): \|\mathbb{E}_{\up{p}}\{\Phi(x,y)\}-\V{a}\|\leq\varepsilon, \mbox{ and }\up{p}(x)=\up{p}_n(x),\  \forall x\in\set{X}\big\}.\end{align*}
Such uncertainty sets only contain distributions with instances' marginal $\up{p}(x)$ that coincides with the empirical $\up{p}_n(x)$ so that they do not include the true underlying distribution for finite number of samples. Therefore, the techniques in \cite{AsiXinBeh:15,FarTse:16,FatAnq:16} cannot ensure out-of-sample generalization with general classification rules and do not provide performance bounds at learning such as those shown below in Theorem~\ref{prop} for \acp{MRC}.
\section{Performance guarantees}\label{sec-3}
This section characterizes the out-of-sample performance of \acp{MRC}. We first present techniques that provide tight performance bounds at learning, and then we show finite-sample generalization bounds for \acp{MRC}' risk in terms of training size and smallest minimax risk.
\subsection{Tight performance bounds}
The following result shows that the proposed approach also allows to obtain bounds for expected losses by solving linear optimization problems.
\begin{theorem}\label{prop}
Let $\Phi:\set{X}\times\set{Y}\to\mathbb{R}^m$, $\V{a}, \V{b}\in\mathbb{R}^{m}$ with $\set{U}^{\V{a},\V{b}}\neq\emptyset$ and $\kappa^{\V{a},\V{b}}(q)$ be given by
\begin{align}\label{lower}\begin{array}{ccll}\kappa^{\V{a},\V{b}}(q)&=\underset{\B{\mu}_{a},\B{\mu}_{b}\in\mathbb{R}^{m},\nu\in\mathbb{R}}{\min}&\V{b}^{\text{T}}\B{\mu}_{b}-\V{a}^{\text{T}}\B{\mu}_{a}-\nu\\&
\hspace{0.3cm}\mbox{ s. t.}&\B{\Phi}(\B{\mu}_{a}-\B{\mu}_{b})+\nu\V{1}\preceq \V{q}\\&
&\B{\mu}_{a},\B{\mu}_{b}\succeq \V{0} \end{array}\end{align}
for a function $q:\set{X}\times\set{Y}\to\mathbb{R}$.
Then, for any $\up{p}\in\set{U}^{\V{a},\V{b}}$ and $\up{h}\in T(\set{X},\set{Y})$ \begin{align}\label{bounds}0\leq-\kappa^{\V{a},\V{b}}(1-h)\leq\ell(\up{h},\up{p})\leq\kappa^{\V{a},\V{b}}(\up{h}-1)\leq 1.\end{align}
In addition, $\ell(\up{h},\up{p})=-\kappa^{\V{a},\V{b}}(1-\up{h})$ (resp. $\ell(\up{h},\up{p})=\kappa^{\V{a},\V{b}}(\up{h}-1)$) if $\up{p}$ minimizes (resp. maximizes) the expected loss of $\up{h}$ over distributions in $\set{U}^{\V{a},\V{b}}$. 
\end{theorem}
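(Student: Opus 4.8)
The plan is to read the two inner bounds in \eqref{bounds} as the optimal value of a single parametric linear program together with its Lagrange dual. Fix $\up{h}\in T(\set{X},\set{Y})$. Using the vector notation $\ell(\up{h},\up{p})=\V{p}^{\text{T}}(\V{1}-\V{h})$ and $\mathbb{E}_{\up{p}}\{\Phi(x,y)\}=\B{\Phi}^{\text{T}}\V{p}$, the quantity $\max_{\up{p}\in\set{U}^{\V{a},\V{b}}}\ell(\up{h},\up{p})$ is the value of
\begin{align*}
\max_{\V{p}}\ \V{p}^{\text{T}}(\V{1}-\V{h})\quad\text{s.t.}\quad \V{a}\preceq\B{\Phi}^{\text{T}}\V{p}\preceq\V{b},\ \V{1}^{\text{T}}\V{p}=1,\ \V{p}\succeq\V{0}.
\end{align*}
First I would attach multipliers $\B{\mu}_{a},\B{\mu}_{b}\succeq\V{0}$ to the two interval constraints and a free multiplier $\nu$ to the normalization $\V{1}^{\text{T}}\V{p}=1$, and collect the terms that are linear in $\V{p}$. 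Maximizing the Lagrangian over $\V{p}\succeq\V{0}$ is finite only when the coefficient of $\V{p}$ is nonpositive, which (after the sign relabeling $\nu\mapsto-\nu$ that aligns the convention with the statement) yields exactly the feasibility set $\B{\Phi}(\B{\mu}_{a}-\B{\mu}_{b})+\nu\V{1}\preceq\V{h}-\V{1}$ and the dual objective $\V{b}^{\text{T}}\B{\mu}_{b}-\V{a}^{\text{T}}\B{\mu}_{a}-\nu$. This is precisely the program defining $\kappa^{\V{a},\V{b}}(\up{h}-1)$.

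The next step is to invoke linear-programming strong duality to turn the dual inequality into the identity $\max_{\up{p}\in\set{U}^{\V{a},\V{b}}}\ell(\up{h},\up{p})=\kappa^{\V{a},\V{b}}(\up{h}-1)$. The two hypotheses needed are exactly the ones available: the primal is feasible because $\set{U}^{\V{a},\V{b}}\neq\emptyset$, and it is bounded because $0\le\ell(\up{h},\up{p})\le1$ for every $\up{p}$. The upper bound $\ell(\up{h},\up{p})\le\kappa^{\V{a},\V{b}}(\up{h}-1)$ in \eqref{bounds} then holds for all $\up{p}\in\set{U}^{\V{a},\V{b}}$, and since $\set{U}^{\V{a},\V{b}}$ is a compact polytope the maximum is attained, giving the stated equality $\ell(\up{h},\up{p})=\kappa^{\V{a},\V{b}}(\up{h}-1)$ whenever $\up{p}$ maximizes the loss of $\up{h}$.

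For the lower bound I would repeat the argument after writing $\min_{\up{p}\in\set{U}^{\V{a},\V{b}}}\ell(\up{h},\up{p})=-\max_{\up{p}\in\set{U}^{\V{a},\V{b}}}\V{p}^{\text{T}}(\V{h}-\V{1})$. The same dualization, now with objective coefficient $\V{h}-\V{1}$ in place of $\V{1}-\V{h}$, produces the program defining $\kappa^{\V{a},\V{b}}(1-\up{h})$, so that $\min_{\up{p}\in\set{U}^{\V{a},\V{b}}}\ell(\up{h},\up{p})=-\kappa^{\V{a},\V{b}}(1-\up{h})$, which is the lower bound together with its tightness clause. Finally, the two outer inequalities $0\le-\kappa^{\V{a},\V{b}}(1-\up{h})$ and $\kappa^{\V{a},\V{b}}(\up{h}-1)\le1$ follow immediately from $\ell(\up{h},\up{p})\in[0,1]$, since the former is a minimum and the latter a maximum of $\ell$ over the nonempty set $\set{U}^{\V{a},\V{b}}$.

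I expect the main obstacle to be the Lagrangian bookkeeping rather than any deep analytic step: one must track the signs carefully so that the multiplier of the normalization constraint reproduces the ``$-\nu$'' appearing in the objective of \eqref{lower} and the ``$+\nu\V{1}$'' appearing in its constraint, and must confirm that maximizing over the nonnegative orthant $\V{p}\succeq\V{0}$ is what forces the constraint $\B{\Phi}(\B{\mu}_{a}-\B{\mu}_{b})+\nu\V{1}\preceq\V{q}$ with the correct right-hand side. Everything else—strong duality, attainment of the extrema over the compact polytope, and the $[0,1]$ clamping—is routine once the primal LP and its dual have been matched to the definition of $\kappa^{\V{a},\V{b}}$.
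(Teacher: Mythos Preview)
Your proposal is correct and follows essentially the same strategy as the paper: identify $\max_{\up{p}\in\set{U}^{\V{a},\V{b}}}\ell(\up{h},\up{p})$ and $\min_{\up{p}\in\set{U}^{\V{a},\V{b}}}\ell(\up{h},\up{p})$ with linear programs over $\V{p}$ and pass to the Lagrange dual to obtain $\kappa^{\V{a},\V{b}}(\up{h}-1)$ and $-\kappa^{\V{a},\V{b}}(1-\up{h})$. The only cosmetic difference is that the paper packages the step ``maximize the Lagrangian over $\V{p}\succeq\V{0}$'' via the conjugate of $f(\V{p})=\V{p}^{\text{T}}(-\V{q})+I_+(\V{p})$ (their Lemma~2), whereas you compute it directly; the content is identical.
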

\begin{proof}
See Appendix~\ref{proof-prop} in the supplementary material.
\end{proof}
For an \ac{MRC} $\up{h}^{\V{a},\V{b}}$,  the upper bound above is directly given by \eqref{upper}, that is, 
$R^{\V{a},\V{b}}=\kappa^{\V{a},\V{b}}(\up{h}^{\V{a},\V{b}}-1)$. On the other hand, its lower bound, denoted by $L^{\V{a},\V{b}}$, requires to solve an additional linear optimization problem given by \eqref{lower} to obtain $L^{\V{a},\V{b}}=-\kappa^{\V{a},\V{b}}(1-\up{h}^{\V{a},\V{b}})$.


The techniques proposed in \cite{DucGlyNam:16,AbaMohKuh:15,ShaKuhMoh:17}
obtain analogous upper and lower bounds corresponding with \ac{RRM} methods that use uncertainty sets defined in terms of f-divergences and Wasserstein distances. Such methods obtain classification rules by minimizing the upper bound of a surrogate expected loss while \acp{MRC} minimize the upper bound of the 0-1 expected loss (risk).
Note that the bounds for expected losses become risk's bounds if the uncertainty set includes the true underlying distribution. Such situation can be attained with a tuneable confidence using uncertainty sets defined by Wasserstein distances as in \cite{AbaMohKuh:15,ShaKuhMoh:17} or using the proposed uncertainty sets in \eqref{uncertainty} with expectation confidence intervals. However, the bounds are only  asymptotical risk's bounds using uncertainty sets defined by f-divergences as in \cite{DucGlyNam:16} or using the proposed uncertainty sets in \eqref{uncertainty} with expectation point estimates.
\subsection{Finite-sample generalization bounds}
The smallest minimax risk using uncertainty sets given by \eqref{uncertainty} with feature mapping $\Phi$ is the non-random constant $R^{\B{\tau}_\infty}$ with $\B{\tau}_\infty=\mathbb{E}
_{\up{p}^*}\{\Phi\}$ because 
$\up{p}^*\in\set{U}^{\V{a},\V{b}}\Rightarrow \set{U}^{\B{\tau}_\infty}\subseteq\set{U}^{\V{a},\V{b}}\Rightarrow R^{\B{\tau}_\infty}\leq R^{\V{a},\V{b}}.$ Such smallest minimax risk corresponds with \ac{MRC} $\up{h}^{\B{\tau}_\infty}$ that would require an infinite number of training samples to exactly determine the features' actual expectation $\B{\tau}_\infty$.

The following result bounds the risk of \acp{MRC} w.r.t. the smallest minimax risk, as well as the difference between the risk of \acp{MRC} and the corresponding minimax expected loss. 
\begin{theorem}\label{th-bounds}
Let 
$\Phi:\set{X}\times\set{Y}\to\mathbb{R}^m$ be a feature mapping, $\delta\in(0,1)$ , and $\B{\tau}_\infty=\mathbb{E}_{\up{p}^*}\{\Phi\}$. If $\B{\tau}_n$, $\V{a}_n$, and $\V{b}_n$ are point and interval estimates for $\B{\tau}_\infty$ obtained from training samples as given by \eqref{interval} with 
\begin{align*}\B{\lambda}=\V{d}\sqrt{\frac{\log m+\log\frac{2}{\delta}}{2}},\  d^{(l)}=\max_{x\in\set{X},y\in\set{Y}} \Phi(x,y)^{(l)}-\min_{x\in\set{X},y\in\set{Y}}\Phi(x,y)^{(l)}\mbox{, for $l=1,2,\ldots,m$}.\end{align*}
Then, with probability at least $1-\delta$
\begin{align}
R(\up{h}^{\V{a}_n,\V{b}_n})&\leq R^{\V{a}_n,\V{b}_n}\leq R^{\B{\tau}_\infty}+2M_\Phi\|\V{d}\|_2\sqrt{\frac{\log m+\log\frac{2}{\delta}}{2}}\frac{1}{\sqrt{n}}\label{bound1}\\[0.2cm]
R(\up{h}^{\B{\tau}_n})&\leq R^{\B{\tau}_n}+M_\Phi\|\V{d}\|_2\sqrt{\frac{\log m+\log\frac{2}{\delta}}{2}}\frac{1}{\sqrt{n}}\label{bound2}\\[0.2cm]
R(\up{h}^{\B{\tau}_n})&\leq R^{\B{\tau}_\infty}+N_\Phi\|\V{d}\|_2\sqrt{\frac{\log m+\log\frac{2}{\delta}}{2}}\frac{1}{\sqrt{n}}\label{bound4}\end{align}
where
\begin{align*}&\,\,M_\Phi\,=\max_{\B{\mu}\in\Omega_\Phi}\|\B{\mu}\|_2,\  
N_\Phi=\max_{\B{\mu}_1,\B{\mu}_2\in\Omega_\Phi}\|\B{\mu}_1-\B{\mu}_2\|_2\\&
\begin{array}{ll}\Omega_\Phi\  =\big\{\hspace{-0.3cm}&\B{\mu}\in\mathbb{R}^m:\  \exists \V{a}\in\text{Conv}(\Phi(\set{X}\times\set{Y}))
  \mbox{   s.t. }  \B{\mu},\nu\mbox{ is the min. euclidean norm solution of \eqref{learning-eq}}\big\}.\end{array}\end{align*}
\end{theorem}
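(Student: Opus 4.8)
The plan is to reduce all three inequalities to a single high-probability event and then argue deterministically on it, using the optimization characterizations of Theorems~\ref{th1} and~\ref{prop}. Since each coordinate $\Phi^{(l)}$ takes values in an interval of width $d^{(l)}$, Hoeffding's inequality gives $\mathbb{P}\{|\tau_n^{(l)}-\tau_\infty^{(l)}|\geq \lambda^{(l)}/\sqrt{n}\}\leq 2\exp(-2(\lambda^{(l)})^2/(d^{(l)})^2)$, and the prescribed $\B{\lambda}$ makes the right-hand side equal to $\delta/m$. A union bound over $l=1,\dots,m$ then yields, with probability at least $1-\delta$, the event $E=\{\V{a}_n\preceq\B{\tau}_\infty\preceq\V{b}_n\}$, i.e.\ $\up{p}^*\in\set{U}^{\V{a}_n,\V{b}_n}$; I would carry out all remaining steps on $E$. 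Throughout I use that $\B{\tau}_n$ and $\B{\tau}_\infty$ both lie in $\text{Conv}(\Phi(\set{X}\times\set{Y}))$ (an empirical and a true average of $\Phi$-values), and I write $(\B{\mu}^n,\nu^n)$ and $(\B{\mu}^\infty,\nu^\infty)$ for the minimum-(euclidean-)norm solutions of the point-estimate problem \eqref{learning-eq} at $\V{a}=\B{\tau}_n$ and $\V{a}=\B{\tau}_\infty$, so that $\B{\mu}^n,\B{\mu}^\infty\in\Omega_\Phi$.

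For \eqref{bound1}, the left inequality is immediate: on $E$ we have $\up{p}^*\in\set{U}^{\V{a}_n,\V{b}_n}$, and since $\up{h}^{\V{a}_n,\V{b}_n}$ is minimax, $R(\up{h}^{\V{a}_n,\V{b}_n})=\ell(\up{h}^{\V{a}_n,\V{b}_n},\up{p}^*)\leq\max_{\up{p}\in\set{U}^{\V{a}_n,\V{b}_n}}\ell(\up{h}^{\V{a}_n,\V{b}_n},\up{p})=R^{\V{a}_n,\V{b}_n}$. For the right inequality I would exploit that the constraints of \eqref{learning-ineq} do not depend on $\V{a},\V{b}$: decomposing $\B{\mu}^\infty$ into its positive and negative parts yields a point feasible for \eqref{learning-ineq} whose objective value, via \eqref{upper} and $\V{a}_n,\V{b}_n=\B{\tau}_n\mp\B{\lambda}/\sqrt{n}$, simplifies to $-\B{\tau}_n^{\text{T}}\B{\mu}^\infty-\nu^\infty+\tfrac{1}{\sqrt{n}}\B{\lambda}^{\text{T}}|\B{\mu}^\infty|$. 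Writing $R^{\B{\tau}_\infty}=-\B{\tau}_\infty^{\text{T}}\B{\mu}^\infty-\nu^\infty$ and using $|\tau_n^{(l)}-\tau_\infty^{(l)}|\leq\lambda^{(l)}/\sqrt{n}$ on $E$ bounds the excess by $\tfrac{2}{\sqrt{n}}\B{\lambda}^{\text{T}}|\B{\mu}^\infty|$; substituting $\B{\lambda}$ and applying Cauchy--Schwarz with $\|\B{\mu}^\infty\|_2\leq M_\Phi$ gives exactly the stated rate.

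Bounds \eqref{bound2} and \eqref{bound4} share a common start. The true distribution satisfies $\up{p}^*\in\set{U}^{\B{\tau}_\infty}$ deterministically, so Theorem~\ref{prop} gives $R(\up{h}^{\B{\tau}_n})=\ell(\up{h}^{\B{\tau}_n},\up{p}^*)\leq\kappa^{\B{\tau}_\infty}(\up{h}^{\B{\tau}_n}-1)$. By \eqref{robust-act} the pair $(\B{\mu}^n,\nu^n)$ is feasible for the $\kappa$-program \eqref{lower} with $\V{q}=\up{h}^{\B{\tau}_n}-1$ (whose constraints are again estimate-independent), giving $\kappa^{\B{\tau}_\infty}(\up{h}^{\B{\tau}_n}-1)\leq-\B{\tau}_\infty^{\text{T}}\B{\mu}^n-\nu^n$. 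For \eqref{bound2} I rewrite the right-hand side as $R^{\B{\tau}_n}+(\B{\tau}_n-\B{\tau}_\infty)^{\text{T}}\B{\mu}^n$, using that this same pair is optimal for $\kappa^{\B{\tau}_n}(\up{h}^{\B{\tau}_n}-1)=R^{\B{\tau}_n}$, and bound the inner product on $E$ via Cauchy--Schwarz and $\|\B{\mu}^n\|_2\leq M_\Phi$. For \eqref{bound4} I instead compare $-\B{\tau}_\infty^{\text{T}}\B{\mu}^n-\nu^n$ with $R^{\B{\tau}_\infty}=-\B{\tau}_\infty^{\text{T}}\B{\mu}^\infty-\nu^\infty$ by a telescoping argument over the common feasible set of \eqref{learning-eq}: adding and subtracting the two point-estimate objectives and using optimality of $\B{\mu}^n$ for $\B{\tau}_n$ bounds the excess by $(\B{\tau}_n-\B{\tau}_\infty)^{\text{T}}(\B{\mu}^n-\B{\mu}^\infty)$, which on $E$ is controlled by $\|\B{\mu}^n-\B{\mu}^\infty\|_2\leq N_\Phi$.

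The routine part is the concentration bound; the delicate part is the bookkeeping tying together the three programs \eqref{learning-ineq}, \eqref{learning-eq}, and \eqref{lower}. The key facts to verify carefully are that the feasible sets of all three are independent of the point/interval estimates, that the minimum-norm learning solution simultaneously attains the learning value and---through \eqref{robust-act}---is feasible for the corresponding $\kappa$-bound, and that the minimizers indexing $\Omega_\Phi$ are exactly $\B{\mu}^n$ and $\B{\mu}^\infty$, so that $M_\Phi$ and $N_\Phi$ legitimately bound $\|\B{\mu}^n\|_2$ and $\|\B{\mu}^n-\B{\mu}^\infty\|_2$. Getting these identifications right, rather than any single inequality, is where the real work lies.
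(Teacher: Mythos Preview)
Your proposal is correct and follows essentially the same approach as the paper: Hoeffding plus a union bound for the high-probability event, then feasibility arguments exploiting that the constraint sets of \eqref{learning-ineq}, \eqref{learning-eq}, and \eqref{lower} are estimate-independent, finishing with Cauchy--Schwarz against $M_\Phi$ or $N_\Phi$. The one minor deviation is in \eqref{bound2}: the paper routes through $\up{p}^*\in\set{U}^{\V{a}_n,\V{b}_n}$ and the interval program, whereas you start from $\up{p}^*\in\set{U}^{\B{\tau}_\infty}$ (as for \eqref{bound4}) and reach $R^{\B{\tau}_n}+(\B{\tau}_n-\B{\tau}_\infty)^{\text{T}}\B{\mu}^n$ directly---a slight streamlining that unifies the two point-estimate bounds but is otherwise the same argument.
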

\begin{proof}
See Appendix~\ref{proof-bounds} in the supplementary material.
\end{proof}
Second inequality in \eqref{bound1} and inequality \eqref{bound4} bound the risk of \acp{MRC} w.r.t. the smallest minimax risk $R^{\B{\tau}_\infty}$; and first inequality in \eqref{bound1} and inequality \eqref{bound2} bound the difference between the risk of \acp{MRC} and the corresponding minimax expected loss. These bounds show differences that decrease with $n$ as $O(1/\sqrt{n})$ with proportionality constants that depend on the confidence $\delta$, and other constants describing the complexity of feature mapping $\Phi$ such as its dimensionality $m$, the difference between its maximum and minimum values $\V{d}$, and bounds for the solutions of \eqref{learning-eq} with vectors $\V{a}$ in the convex hull of $\Phi(\set{X}\times\set{Y})$. 


The generalization bounds for the risk provided in Theorem~3 of \cite{FarTse:16} and Theorems~2 and 3 of \cite{LeeRag:18} are analogous to those in inequalities \eqref{bound1} and \eqref{bound4} above. In particular, they also show risk's bounds w.r.t. to the minimax risk corresponding to an infinite number of samples. The bounds in \cite{FarTse:16} and \cite{LeeRag:18} correspond to uncertainty sets defined by expectation fits with empirical marginals and Wasserstein distances, respectively, while the bounds \eqref{bound1} and \eqref{bound4} above correspond to the proposed uncertainty sets in \eqref{uncertainty}. The generalization bounds in Corollary~3.2 in \cite{NamDuc:17} and Theorem~2 of \cite{AbaMohKuh:15} are analogous to those in inequalities \eqref{bound1} and \eqref{bound2} above. In particular, they also show how the risk can be upper bounded (assymptotically in \cite{NamDuc:17} and inequality \eqref{bound2} or with certain confidence in \cite{AbaMohKuh:15} and inequality \eqref{bound1}) by the corresponding finite-sample minimax expected loss. The bounds in \cite{NamDuc:17} and  \cite{AbaMohKuh:15} correspond with uncertainty sets defined by f-divergences, and Wasserstein distances, respectively, while the bounds \eqref{bound1} and \eqref{bound2} above correspond with the proposed uncertainty sets defined by linear constraints. 

\section{Implementation of \acp{MRC}}\label{sec-4}
Algorithm~\ref{codes1} describes \acp{MRC} learning stage that obtains parameters $\B{\mu}^*,\nu^*$ by solving optimization problem \eqref{learning-ineq} in Theorem~\ref{th1} given expectation estimates in \eqref{interval} obtained from training samples. An upper bound for the expected loss is directly obtained as by-product of such optimization while a lower bound for the expected loss requires to solve an additional linear optimization problem given by \eqref{lower} in Theorem~\ref{prop}.

\begin{algorithm}
\setstretch{1.1}
\caption{\label{codes1}-- Pseudocode for \ac{MRC} learning}
 \begin{tabular}{ll}\hspace{-0.1cm}\textbf{Input:}&\hspace{-0.3cm}Training samples $(x_1,y_1),(x_2,y_2),\ldots,(x_n,y_n)$, width of confidence intervals $\B{\lambda}$\\
&\hspace{-0.3cm}feature mapping $\Phi$, and matrices $\B{\mathsf{\Phi}}_1,\B{\mathsf{\Phi}}_2,\ldots,\B{\mathsf{\Phi}}_r$ satisfying \eqref{M-matrices}\\
\hspace{-0.1cm}\textbf{Output:}&\hspace{-0.3cm}Parameters $\B{\mu}^*,\nu^*$, upper bound $R^{\V{a}_n,\V{b}_n}$, and [Optional] lower bound $L^{\V{a}_n,\V{b}_n}$\end{tabular}

\begin{algorithmic}[1] 
\setstretch{1.5}
		\State $\B{\tau}_n\gets \frac{1}{n}\sum_{i=1}^n\Phi(x_i,y_i)$, $\V{a}_n\gets \B{\tau}_n-\B{\lambda}\frac{1}{\sqrt{n}}$, $\V{b}_n\gets \B{\tau}_n+\B{\lambda}\frac{1}{\sqrt{n}}$
		\State\hspace{-0.1cm}\vspace{-.83cm}\begin{align*}\hspace{-0.23cm}\begin{array}{lll}
		\B{\mu}_{b}^*,\B{\mu}_{a}^*,\nu^*\gets&\hspace{-.25cm}\underset{\B{\mu}_{a},\B{\mu}_{b},\nu}{\arg\min}&  \V{b}_n^{\text{T}}\B{\mu}_{b}-\V{a}_n^{\text{T}}\B{\mu}_{a}-\nu\\
		&\hspace{-0.2cm}\mbox{s. t.}&\hspace{-0.7cm}(\V{1}_\set{C})^{\text{T}}\left(\B{\mathsf{\Phi}}_i(\B{\mu}_{a}-\B{\mu}_{b})+\nu\V{1}\right) \leq1- |\set{C}|,\  \forall i\in\{1,2,\ldots,r\}, \set{C}\subseteq\set{Y},\set{C}\neq\emptyset\\[0.0cm]
&&\hspace{-0.6cm}\B{\mu}_{a},\B{\mu}_{b}\succeq \V{0} \end{array}\end{align*}
\State  $\B{\mu}^*\gets\B{\mu}_{a}^*-\B{\mu}_{b}^*$, $R^{\V{a}_n,\V{b}_n}\gets\V{b}_n^{\text{T}}\B{\mu}_{b}^*-\V{a}_n^{\text{T}}\B{\mu}_{a}^*-\nu^*$
\State \mbox{[Optional]}\vspace{-0.85cm}\begin{align*}\hspace{-.4cm}\begin{array}{llll}L^{\V{a}_n,\V{b}_n}\gets&\hspace{-.3cm}-\underset{\B{\mu}_{a},\B{\mu}_{b},\nu}{\min}\hspace{-.3cm}&\V{b}_n^{\text{T}}\B{\mu}_{b}-\V{a}_n^{\text{T}}\B{\mu}_{a}-\nu\\&
\hspace{.25cm}\mbox{s. t.}&\hspace{-.cm}\B{\mathsf{\Phi}}_i(\B{\mu}_{a}-\B{\mu}_{b})+\nu\V{1}\preceq\V{1}-\B{\varepsilon}_{i},\   \forall i\in\{1,2,\ldots,r\}\\[0.0cm]&
&\hspace{-.cm} \B{\mu}_{a},\B{\mu}_{b}\succeq \V{0} \end{array}\end{align*}\vspace{-0.7cm}\hspace{-.5cm}
\begin{align*}&\mbox{where}&\hspace{-0.2cm}\B{\varepsilon}_{i}&=\left\{\begin{array}{cc}
(\B{\mathsf{\Phi}}_i\B{\mu}^*+(\nu^*+1)\V{1}])_+/c_i&\mbox{if }c_i\neq 0\\
\V{1}/|\set{Y}|&\mbox{if }c_i=0\end{array}\right.&
\mbox{\hspace{-0.3cm}and }&c_i&=\|(\B{\mathsf{\Phi}}_i\B{\mu}^*+(\nu^*+1)\V{1})_+\|_1
\end{align*}
\end{algorithmic}
\end{algorithm}

Optimization problems \eqref{learning-ineq} and \eqref{lower} addressed at learning can be efficiently solved; in the following we show equivalent representations of such optimization problems that are appropriate for implementation. For each $x\in\set{X}$, let $\B{\Phi}_x$ be the $|\set{Y}|\times m$ matrix with $y$-th row equal to $\Phi(x,y)^\text{T}$. If $\B{\mathsf{\Phi}}_1,\B{\mathsf{\Phi}}_2,\ldots,\B{\mathsf{\Phi}}_r$ are $r$ matrices describing the range of matrices $\B{\Phi}_x$ for varying $x\in\set{X}$, i.e.,
\begin{align}\label{M-matrices}\{\B{\mathsf{\Phi}}_{i}:\  i=1,2,\ldots,r\}=\{\B{\Phi}_x:\ x\in\set{X}\}\end{align}
then,  constraints in optimization problem \eqref{lower} are equivalent to $2m+r|\set{Y}|$ linear constraints. Constraints in optimization problem \eqref{learning-ineq} are equivalent to $2m$ linear and $r$ nonlinear constraints since
$\|(\B{\Phi}(\B{\mu}_{a}-\B{\mu}_{b})+(\nu+1)\V{1})_+\|_{1,\infty}\leq 1$
is equivalent to 
\begin{align}\label{nonlinear}\|\left(\B{\mathsf{\Phi}}_{i}(\B{\mu}_{a}-\B{\mu}_{b})+(\nu+1)\V{1}\right)_+\|_1\leq 1\  \mbox{for }i=1,2,\ldots,r. \end{align}
Furthermore, constraints in optimization problem \eqref{learning-ineq} are also equivalent to $2m+r(2^{|\set{Y}|}-1)$ linear constraints because \eqref{nonlinear} is equivalent to
\begin{align*}
 (\V{1}_\set{C})^{\text{T}}\left(\B{\mathsf{\Phi}}_i(\B{\mu}_{a}-\B{\mu}_{b})+\nu\V{1}\right) \leq 1-|\set{C}|,\  \forall\   i\in\{1,2,\ldots,r\},\  
   \set{C}\subseteq\set{Y},\set{C}\neq\emptyset
   \end{align*}
since $\|(\B{\mathsf{\Phi}}_i(\B{\mu}_{a}-\B{\mu}_{b})+(\nu+1)\V{1})_+\|_1=\underset{\set{C}\subseteq \set{Y}}{\max}(\V{1}_\set{C})^{\text{T}}\left(\B{\mathsf{\Phi}}_i(\B{\mu}_{a}-\B{\mu}_{b})+(\nu+1)\V{1}\right).$ 

Classification problems with a moderate number of classes $|\set{Y}|$ can benefit by the formulation of \eqref{learning-ineq} as a linear optimization problem with $2m+r(2^{|\set{Y}|}-1)$ constraints instead of that as nonlinear convex optimization with $2m+r$ constraints. The number $r$ of matrices $\B{\mathsf{\Phi}}_1,\B{\mathsf{\Phi}}_2,\ldots,\B{\mathsf{\Phi}}_r$ needed to cover the range of matrices $\B{\Phi}_x$, $x\in\set{X}$, determines the number of constraints in the optimization problems solved for \ac{MRC} learning. Efficient optimization can be achieved using constraint generation techniques or approximations with a subset of constraints. 

\begin{figure}
\psfrag{Y}[l][t][0.65]{\hspace{-3mm}Risk}
\psfrag{X}[l][b][0.65]{\hspace{-10mm}Training size n}
\psfrag{A12345678912345678912345678}[l][][0.52]{\hspace{-15.5mm}\ac{MRC} Risk $R(\up{h}^{\V{a}_n,\V{b}_n})$}
\psfrag{B}[l][][0.52]{\hspace{-0.4mm}Upper bound $R^{\V{a}_n,\V{b}_n}$}
\psfrag{C}[l][][0.52]{\hspace{-0.4mm}Lower bound $L^{\V{a}_n,\V{b}_n}$}
\psfrag{D}[l][][0.52]{\hspace{-0.4mm}Bayes Risk $R_{\text{Bayes}}$}
\psfrag{0.2}[l][][0.4]{\hspace{-3mm}$0.2$}
\psfrag{0.1}[l][][0.4]{\hspace{-3mm}$0.1$}
\psfrag{0}[l][][0.4]{\hspace{-3mm}$0$}
\psfrag{0.3}[l][][0.4]{\hspace{-3mm}$0.3$}
\psfrag{100}[l][][0.4]{\hspace{-2mm}$100$}
\psfrag{1000}[l][][0.4]{\hspace{-4mm}$1000$}
\psfrag{10000}[l][][0.4]{\hspace{-4mm}$10000$}
\psfrag{500}[l][][0.4]{\hspace{-4mm}$500$}
\psfrag{5000}[l][][0.4]{\hspace{-4mm}$5000$}
\centering
\subfigure[\footnotesize{Adult dataset.} ]{\includegraphics[width=0.49\textwidth]{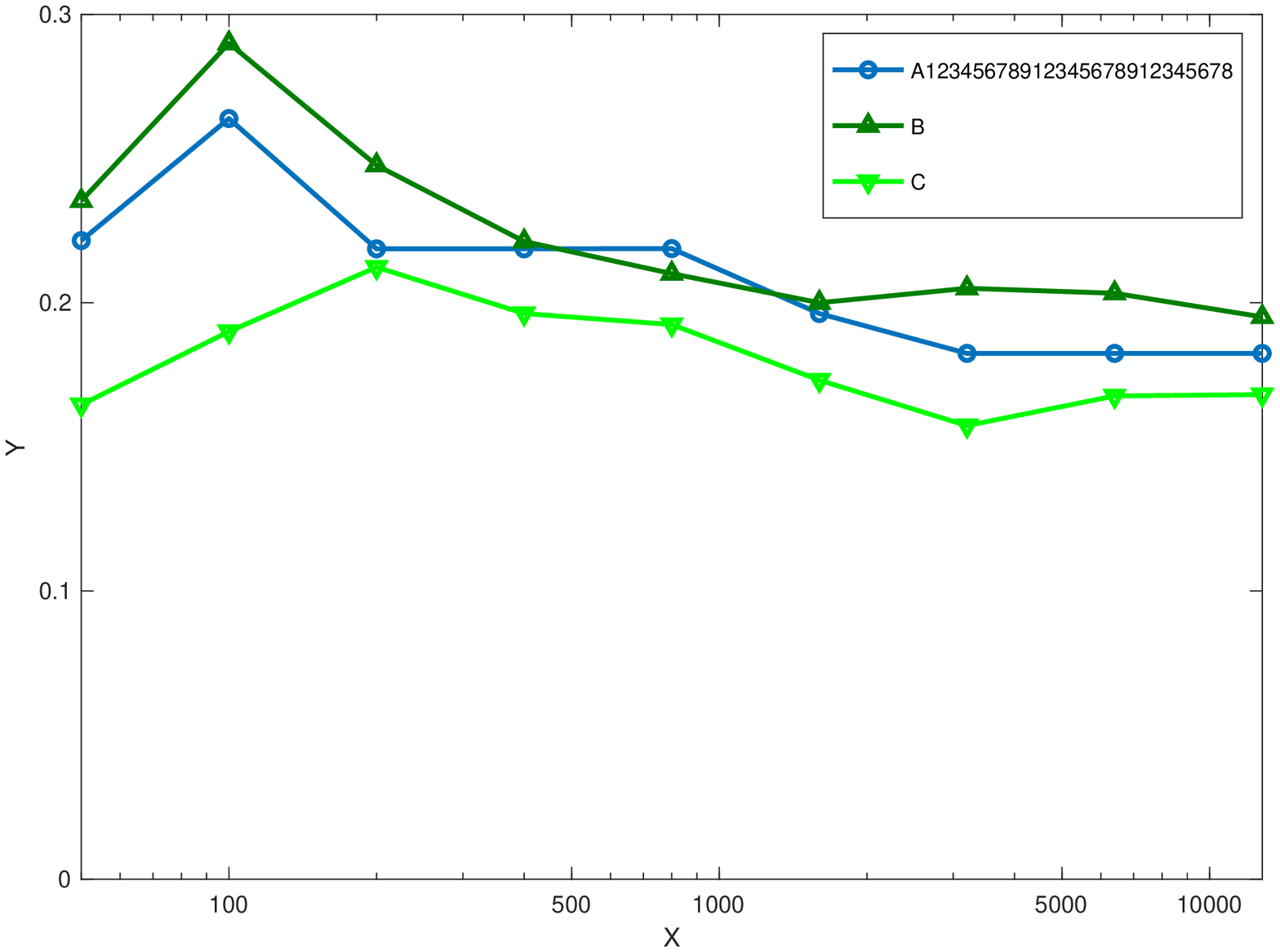}\label{fig_bounds1}}
\subfigure[\footnotesize{Magic dataset.}]{\includegraphics[width=0.49\textwidth]{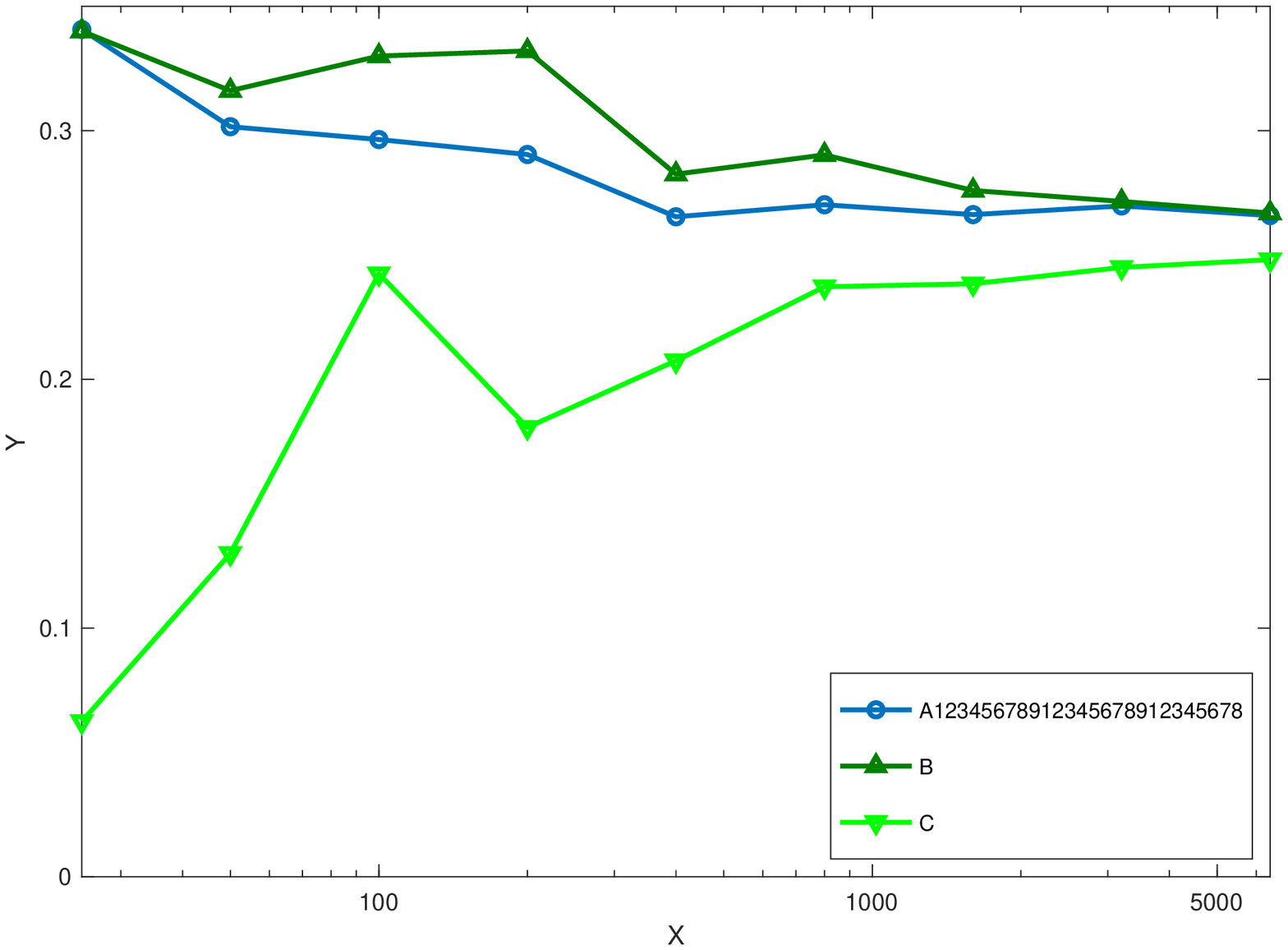}\label{fig_bounds2}}
\caption{\small Upper and lower \ac{MRC} risk bounds obtained at learning.}
\end{figure}

At prediction stage, \acp{MRC} use the parameters $\B{\mu}^*$ and $\nu^*$ obtained at learning to assign label $y\in\set{Y}$ to instance $x\in\set{X}$ with probability 
\begin{align}
\up{h}^{\V{a},\V{b}}(y|x)=\left\{\begin{array}{cc}
(\Phi(x,y)^{\text{T}}\B{\mu}^*+\nu^*+1)_+/c_x&\mbox{if }c_x\neq 0\\[0.05cm]
1/\set{Y}&\mbox{if }c_x=0\end{array}\right.\label{t-a,b}
\end{align}
that satisfies \eqref{robust-act} in Theorem~\ref{th1} by taking $c_x=\underset{y\in\set{Y}}{\mathlarger{\sum}}(\Phi(x,y)^{\text{T}}\B{\mu}^*+\nu^*+1)_+$.
\section{Experimental results}\label{sec-5}
In this section we show numerical results for \acp{MRC} using 8 UCI datasets for multi-class classification. The first set of results shows the suitability of the upper and lower bounds $R^{\V{a},\V{b}}$ and $L^{\V{a},\V{b}}$ for \acp{MRC} with varying training sizes, while the second set of results compares the classification error of \acp{MRC} w.r.t. state-of-the-art techniques. 

\acp{MRC}' results are obtained using feature mappings given by instances' thresholding, similarly to those used by maximum entropy and logistic regression methods \cite{MehRos:18,DudPhi:04,PhiAnd:06}. Such feature mappings are adequate for a streamlined implementation of \acp{MRC} because they take a reduced number of values.\footnote{The implementation of \acp{MRC} with more sophisticated feature mappings, such as those embedding data into a \ac{RKHS}, can be enabled by using constraint generation techniques or subgradient descent methods.} Let each instance $x\in\set{X}$ be given by $\V{x}=[x^{(1)},x^{(2)},\ldots,x^{(D)}]^{\text{T}}\in\mathbb{R}^D$, and let $\text{Th}_i\in\mathbb{R}$ be a threshold corresponding with dimension $d_i\in\{1,2,\ldots,D\}$ for $i=1,2,\ldots,k$. We consider feature mappings with $m=|\set{Y}|(k+1)$ components corresponding to the different combinations of labels and thresholds. Specifically, 
\begin{align}\label{features}\Phi^{(l)}(x,y)&=\mathbb{I}\left\{y=i\right\}
\mbox{ for }l=(i-1)(k+1)+1,\  i=1,2,\ldots,|\set{Y}|\nonumber\\
\Phi^{(l)}(x,y)&=\mathbb{I}\big\{x^{(d_j)}\leq\text{Th}_{j}\big\}\mathbb{I}\left\{y=i\right\}\nonumber\\
&\mbox{ for }l=(i-1)(k+1)+j+1,\  i=1,2,\ldots,|\set{Y}|, j=1,2,\ldots,k.
\end{align}
We obtain up to $k=200/|\set{Y}|$ thresholds using one-dimensional decision trees (decision stumps) so that the feature mapping has up to $m=200+|\set{Y}|$ components, and we solve the optimization problems at learning with the constraints corresponding to the $r=n$ matrices $\B{\mathsf{\Phi}}_{i}=\B{\Phi}_{x_i}, i=1,2,\ldots,n,$
obtained from the $n$ training instances. For all datasets, interval estimates for feature mapping expectations were obtained using \eqref{interval} with $\lambda^{(i)}=0.25$ for $i=1,2,\ldots,m$.  All other classification techniques were implemented using their default parameters, and the convex optimization problems have been solved using CVX package \cite{GraBoyYe:06}.  

In the first set of experimental results, we use ``Adult'' and ``Magic'' data sets from the UCI repository. For each training size, one instantiation of training samples is used for learning as described in Algorithm~\ref{codes1}, and \ac{MRC}'s risk is estimated using the remaining samples. It can be observed from the Figures \ref{fig_bounds1} and \ref{fig_bounds2} that the lower and upper bounds obtained at learning can offer accurate estimates for the risk without using test samples.


In the second set of experimental results, we use $6$ data sets from the UCI repository (first column of Table~\ref{table:results}). \acp{MRC} are compared with $7$ classifiers: \ac{DT}, \ac{QDA}, \ac{KNN}, Gaussian kernel \ac{SVM}, and \ac{RF}, as well as the related \ac{RRM} classifiers \ac{AMC}, and \ac{MEM}. The first 5 classifiers were implemented using scikit-learn package, \ac{AMC} \cite{FatAnq:16} was implemented with Gaussian kernel using the publicly available code provided by the authors in \url{https://github.com/rizalzaf/adversarial-multiclass}, and \ac{MEM} was implemented as shown in \cite{FarTse:16}. The errors and standard deviations in Table~\ref{table:results} have been estimated using paired and stratified $10$-fold cross validation. The upper and lower bounds showed in columns UB and LB, respectively, are obtained without averaging, that is, by one-time learning \acp{MRC} with all samples. It can be observed from the table that the accuracy of proposed \acp{MRC} is competitive with state-of-the-art techniques even using a simple feature mapping given by instances' thresholding. Table~\ref{table:results} also shows the tightness of the presented performance bounds for assorted datasets. Python code with the proposed \ac{MRC} is provided in \url{https://github.com/MachineLearningBCAM/Minimax-risk-classifiers-NeurIPS-2020} with the settings used in these experimental results.


\begin{table*}
\caption{\small Classification error and performance bounds of \ac{MRC} in comparison with state-of-the-art techniques.}
\label{table:results}
\centering
\def\arraystretch{1.3}
\resizebox{\textwidth}{!}{%
\begin{tabular}{l|ccc|cccccccc}
\toprule Data set&LB& MRC&UB&QDA &DT& KNN& SVM& RF&AMC&MEM\vspace{-0.05cm}\\[0.cm]
\hline
Mammog.&$.16$&$.18\,$\raisebox{1.5pt}{$\mathsmaller{\pm}$}$\,.04$&$.21$&$.20\,$\raisebox{1.5pt}{$\mathsmaller{\pm}$}$\,.04$& $.24\,$\raisebox{1.5pt}{$\mathsmaller{\pm}$}$\,.04$& $.22\,$\raisebox{1.5pt}{$\mathsmaller{\pm}$}$\,.04$& $.18\,$\raisebox{1.5pt}{$\mathsmaller{\pm}$}$\,.03$& $.21\,$\raisebox{1.5pt}{$\mathsmaller{\pm}$}$\,.06$& $.18\,$\raisebox{1.5pt}{$\mathsmaller{\pm}$}$\,.03$ &$.22\,$\raisebox{1.5pt}{$\mathsmaller{\pm}$}$\,.04$\\
Haberman&$.24$&$.27\,$\raisebox{1.5pt}{$\mathsmaller{\pm}$}$\,.03$&$.27$&$.24\,$\raisebox{1.5pt}{$\mathsmaller{\pm}$}$\,.03$& $.39\,$\raisebox{1.5pt}{$\mathsmaller{\pm}$}$\,.14$& $.30\,$\raisebox{1.5pt}{$\mathsmaller{\pm}$}$\,.07$& $.26\,$\raisebox{1.5pt}{$\mathsmaller{\pm}$}$\,.04$& $.35\,$\raisebox{1.5pt}{$\mathsmaller{\pm}$}$\,.12$&  $.25\,$\raisebox{1.5pt}{$\mathsmaller{\pm}$}$\,.04$ &$.27\,$\raisebox{1.5pt}{$\mathsmaller{\pm}$}$\,.02$\\
Indian liv.& $.28$&$.29\,$\raisebox{1.5pt}{$\mathsmaller{\pm}$}$\,.01$&$.30$ &$.44\,$\raisebox{1.5pt}{$\mathsmaller{\pm}$}$\,.08$& $.35\,$\raisebox{1.5pt}{$\mathsmaller{\pm}$}$\,.09$& $.34\,$\raisebox{1.5pt}{$\mathsmaller{\pm}$}$\,.05$& $.29\,$\raisebox{1.5pt}{$\mathsmaller{\pm}$}$\,.02$& $.30\,$\raisebox{1.5pt}{$\mathsmaller{\pm}$}$\,.05$& $.29\,$\raisebox{1.5pt}{$\mathsmaller{\pm}$}$\,.01$ &$.29\,$\raisebox{1.5pt}{$\mathsmaller{\pm}$}$\,.01$\\
Diabetes& $.22$&$.26\,$\raisebox{1.5pt}{$\mathsmaller{\pm}$}$\,.03$&$.28$&$.26\,$\raisebox{1.5pt}{$\mathsmaller{\pm}$}$\,.03$ & $.29\,$\raisebox{1.5pt}{$\mathsmaller{\pm}$}$\,.07$& $.26\,$\raisebox{1.5pt}{$\mathsmaller{\pm}$}$\,.05$& $.24\,$\raisebox{1.5pt}{$\mathsmaller{\pm}$}$\,.04$& $.26\,$\raisebox{1.5pt}{$\mathsmaller{\pm}$}$\,.05$&  $.24\,$\raisebox{1.5pt}{$\mathsmaller{\pm}$}$\,.04$ &$.34\,$\raisebox{1.5pt}{$\mathsmaller{\pm}$}$\,.04$\\
Credit&  $.12$&$.15\,$\raisebox{1.5pt}{$\mathsmaller{\pm}$}$\,.18$&$.17$&$.22\,$\raisebox{1.5pt}{$\mathsmaller{\pm}$}$\,.07$& $.22\,$\raisebox{1.5pt}{$\mathsmaller{\pm}$}$\,.14$& $.14\,$\raisebox{1.5pt}{$\mathsmaller{\pm}$}$\,.09$& $.16\,$\raisebox{1.5pt}{$\mathsmaller{\pm}$}$\,.17$& $.17\,$\raisebox{1.5pt}{$\mathsmaller{\pm}$}$\,.15$& $.15\,$\raisebox{1.5pt}{$\mathsmaller{\pm}$}$\,.18$&$.14\,$\raisebox{1.5pt}{$\mathsmaller{\pm}$}$\,.04$\\
Glass& $.22$&$.36\,$\raisebox{1.5pt}{$\mathsmaller{\pm}$}$\,.08$&$.47$&$.64\,$\raisebox{1.5pt}{$\mathsmaller{\pm}$}$\,.04$& $.39\,$\raisebox{1.5pt}{$\mathsmaller{\pm}$}$\,.18$& $.34\,$\raisebox{1.5pt}{$\mathsmaller{\pm}$}$\,.08$& $.34\,$\raisebox{1.5pt}{$\mathsmaller{\pm}$}$\,.11$&$.40\,$\raisebox{1.5pt}{$\mathsmaller{\pm}$}$\,.14$& $.42\,$\raisebox{1.5pt}{$\mathsmaller{\pm}$}$\,.14$ & $.35\,$\raisebox{1.5pt}{$\mathsmaller{\pm}$}$\,.08$\\
\hline\vspace{-5mm}\\
Avg. rank & &2.7&& 5.1& 7.0& 3.8& 2.0& 5.3&  2.5&3.8\\[-0.05cm]
\bottomrule
\end{tabular}}
\end{table*}
\normalsize

\section{Conclusion}
The proposed \acp{MRC} minimize the worst-case expected 0-1 loss over general classification rules, and provide performance guarantees at learning. The paper also describes \acp{MRC}' implementation in practice, and presents their finite-sample generalization bounds. Experimentation with benchmark datasets shows the reliability and tightness of the presented performance bounds, and the competitive classification performance of \acp{MRC} with simple feature mappings given by thresholds. 
The results presented show that supervised classification does not require to choose a surrogate loss that substitutes original 0-1 loss, and a specific family that constraints classification rules. Differently from conventional techniques,
the inductive bias exploited by \acp{MRC} comes only from a feature mapping that  serves to constrain the distributions considered. Learning with \acp{MRC} is achieved without further design choices by solving linear optimization problems that can also provide tight performance guarantees.

\clearpage

\section*{Broader Impact}
The results presented in the paper can enable new approaches for supervised learning that can benefit general applications of supervised classification. Such results do not put anybody at a disadvantage, create consequences in case of failure or leverage biases in the data.
\begin{ack}
Funding in direct support of this work has been provided by the Spanish Ministry of Economy and Competitiveness MINECO through Ramon y Cajal Grant RYC-2016-19383, BCAM's Severo Ochoa Excellence Accreditation SEV-2017-0718, Project PID2019-105058GA-I00, and Project TIN2017-82626-R, and by the Basque Government through the ELKARTEK and BERC 2018-2021 programmes.
\end{ack}

\bibliography{IEEEabrv,StringDefinitions,BiblioCV,WGroup,bib-santi}
\bibliographystyle{unsrt}
\newpage
\normalsize
\onecolumn

\appendix
\section*{Appendices}

\section{Auxiliary lemmas}
The proofs of Theorem~\ref{th1} and Theorem~\ref{prop} require the lemmas provided below.

\begin{lemma}\label{lemma-dual}
The norms $\|\cdot\|_{\infty,1}$ and $\|\cdot\|_{1,\infty}$ are dual. 
\end{lemma}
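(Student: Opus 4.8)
The plan is to prove the duality by computing the dual norm of $\|\cdot\|_{\infty,1}$ directly from the definition and verifying that it coincides with $\|\cdot\|_{1,\infty}$; the reverse implication then follows because the dual operation is involutive on finite-dimensional spaces. Recall that the dual norm of $\|\cdot\|_{\infty,1}$ evaluated at a vector $\V{u}$ indexed by $\set{X}\times\set{Y}$ is $\max_{\|\V{v}\|_{\infty,1}\leq 1}\V{u}^{\text{T}}\V{v}$, so the goal reduces to showing that this quantity equals $\|\V{u}\|_{1,\infty}=\max_{x\in\set{X}}\|\V{u}_x\|_1$.

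First I would establish the upper bound. Writing the inner product block-wise as $\V{u}^{\text{T}}\V{v}=\sum_{x\in\set{X}}\V{u}_x^{\text{T}}\V{v}_x$ and applying H\"older's inequality within each block (using that $\ell_1$ and $\ell_\infty$ are dual on the fixed-$x$ subvectors), one obtains $\V{u}_x^{\text{T}}\V{v}_x\leq\|\V{u}_x\|_1\|\V{v}_x\|_\infty$. Bounding each factor $\|\V{u}_x\|_1$ by $\max_{x\in\set{X}}\|\V{u}_x\|_1=\|\V{u}\|_{1,\infty}$ and summing over $x$ yields $\V{u}^{\text{T}}\V{v}\leq\|\V{u}\|_{1,\infty}\sum_{x\in\set{X}}\|\V{v}_x\|_\infty=\|\V{u}\|_{1,\infty}\|\V{v}\|_{\infty,1}$. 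Restricting to $\|\V{v}\|_{\infty,1}\leq 1$ then gives the $\leq$ direction of the claimed identity.

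Next I would show the bound is attained, which gives the matching $\geq$ direction. Assuming $\V{u}\neq\V{0}$ (the zero case being trivial), let $x^\star\in\arg\max_{x\in\set{X}}\|\V{u}_x\|_1$ and construct $\V{v}$ concentrated on that block: set $\V{v}_x=\V{0}$ for $x\neq x^\star$ and choose $\V{v}_{x^\star}$ to be the sign vector of $\V{u}_{x^\star}$, i.e.\ each component equal to $+1$, $-1$, or $0$ according to the sign of the corresponding entry of $\V{u}_{x^\star}$. Then $\|\V{v}_{x^\star}\|_\infty\leq 1$, so $\|\V{v}\|_{\infty,1}=\|\V{v}_{x^\star}\|_\infty\leq 1$, while $\V{u}^{\text{T}}\V{v}=\V{u}_{x^\star}^{\text{T}}\V{v}_{x^\star}=\|\V{u}_{x^\star}\|_1=\|\V{u}\|_{1,\infty}$. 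This achieves the upper bound inside the unit ball of $\|\cdot\|_{\infty,1}$, proving that the dual norm of $\|\cdot\|_{\infty,1}$ is exactly $\|\cdot\|_{1,\infty}$.

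Finally, since duality of norms is a symmetric, involutive relation on finite-dimensional spaces, the fact that the dual of $\|\cdot\|_{\infty,1}$ equals $\|\cdot\|_{1,\infty}$ immediately implies that the dual of $\|\cdot\|_{1,\infty}$ equals $\|\cdot\|_{\infty,1}$, establishing the full claim. I do not anticipate a genuine obstacle here: the only point requiring mild care is the block-wise bookkeeping, namely keeping the inner $\ell_1$/$\ell_\infty$ H\"older step separate from the outer $\ell_1$/$\ell_\infty$ pairing across blocks, and ensuring that the extremal $\V{v}$ concentrates all its mass on a single block $x^\star$ so that it genuinely lies in the unit ball of $\|\cdot\|_{\infty,1}$.
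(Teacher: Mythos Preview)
Your proof is correct and follows essentially the same approach as the paper: establish the upper bound $\V{u}^{\text{T}}\V{v}\leq\|\V{u}\|_{1,\infty}\|\V{v}\|_{\infty,1}$ by a block-wise $\ell_1/\ell_\infty$ estimate, then achieve equality by choosing a sign vector concentrated on the maximizing block $x^\star$. The only minor addition is your closing remark invoking involutivity of the dual-norm operation to get the reverse direction, which the paper leaves implicit.
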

\begin{proof}

The dual norm of $\|\cdot\|_{\infty,1}$ assigns each $\V{w}\in\mathbb{R}^{|\set{I}||\set{J}|}$ for finite sets $\set{I}$ and $\set{J}$, the real number
$$\sup_{\V{v}:\  \|\V{v}\|_{\infty,1}\leq 1}\V{w}^{\text{T}}\V{v}.$$
We have that for $\V{v}$ with $\|\V{v}\|_{\infty,1}\leq 1$
\begin{align*}\V{w}^{\text{T}}\V{v}&=\sum_{i\in\set{I}}\sum_{j\in\set{J}}w_{(i,j)}v_{(i,j)}\leq\sum_{i\in\set{I}}\sum_{j\in\set{J}}|w_{(i,j)}||v_{(i,j)}|\\
&\leq \sum_{i\in\set{I}}\left(\max_j|v_{(i,j)}|\right)\sum_{j\in\set{J}}|w_{(i,j)}|\leq\max_{i\in\set{I}}\sum_{j\in\set{J}}|w_{(i,j)}|\sum_{i\in\set{I}}\left(\max_j|v_{(i,j)}|\right)\\&=\|\V{w}\|_{1,\infty}\|\V{v}\|_{\infty,1}\leq\|\V{w}\|_{1,\infty}\end{align*}
So, to prove the result we just need to find a vector $\V{u}$ such that $\|\V{u}\|_{\infty,1}\leq 1$ and $\V{w}^{\text{T}}\V{u}=\|\V{w}\|_{1,\infty}$. Let $\iota\in\arg\max_{i\in\set{I}}\sum_{j\in\set{J}}|w_{(i,j)}|$, then $\V{u}$ given by 
$$u_{(i,j)}=\left\{\begin{array}{cc}1&\mbox{ if }i=\iota \mbox{ and } w_{(i,j)}\geq 0\\
-1&\mbox{ if }i=\iota \mbox{ and } w_{(i,j)}< 0\\
0&\mbox{ otherwise }\end{array}\right.$$
satisfies $\|\V{u}\|_{\infty,1}\leq 1$ and $\V{w}^{\text{T}}\V{u}=\|\V{w}\|_{1,\infty}$.

\end{proof}

\begin{lemma}\label{lemma-conjugate}
Let $\V{u}\in\mathbb{R}^{|\set{I}||\set{J}|}$ for finite sets $\set{I}$ and $\set{J}$, and $f_1$, $f_2$ be the functions $f_1(\V{v})=\|\V{v}\|_{\infty,1}-\V{1}^{\text{T}}\V{v}+I_+(\V{v})$ and $f_2(\V{v})=\V{v}^{\text{T}}\V{u}+I_+(\V{v})$ for $\V{v}\in\mathbb{R}^{|\set{I}||\set{J}|}$, where
$$I_+(\V{v})=\left\{\begin{array}{cc}0 &\mbox{if}\  \V{v}\succeq\V{0} \\\infty&\mbox{otherwise}\end{array}\right..$$
Then, their conjugate functions are
$$f_1^*(\V{w})=\left\{\begin{array}{cc}0 &\mbox{if}\  \|(\V{1}+\V{w})_+\|_{1,\infty}\leq 1 \\\infty&\mbox{otherwise}\end{array}\right.$$
$$f_2^*(\V{w})=\left\{\begin{array}{cc}0 &\mbox{if}\  \V{w}\preceq\V{u}  \\\infty&\mbox{otherwise}\end{array}\right..$$

\end{lemma}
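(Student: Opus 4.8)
The plan is to compute both conjugates directly from the definition $f^*(\V{w})=\sup_{\V{v}}\big(\V{w}^{\text{T}}\V{v}-f(\V{v})\big)$, using that the indicator $I_+$ restricts each supremum to the nonnegative orthant $\V{v}\succeq\V{0}$. The conjugate of $f_2$ is routine and I would dispatch it first: we have $f_2^*(\V{w})=\sup_{\V{v}\succeq\V{0}}(\V{w}-\V{u})^{\text{T}}\V{v}$, a supremum of a linear functional over a cone. It equals $0$ (attained at $\V{v}=\V{0}$) when every component of $\V{w}-\V{u}$ is nonpositive, i.e. $\V{w}\preceq\V{u}$, and it equals $+\infty$ otherwise, by driving a coordinate of $\V{v}$ where $\V{w}-\V{u}$ is positive to $+\infty$. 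This gives the stated form of $f_2^*$.

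The conjugate of $f_1$ is the main obstacle because of the mixed norm $\|\cdot\|_{\infty,1}$. I would first write $f_1^*(\V{w})=\sup_{\V{v}\succeq\V{0}}\big((\V{1}+\V{w})^{\text{T}}\V{v}-\|\V{v}\|_{\infty,1}\big)$ and set $\V{s}=\V{1}+\V{w}$. The key structural observation is that the objective is positively homogeneous of degree one in $\V{v}$, so the supremum is either $0$ or $+\infty$; it is $0$ exactly when $\V{s}^{\text{T}}\V{v}\leq\|\V{v}\|_{\infty,1}$ holds for all $\V{v}\succeq\V{0}$. Next I would exploit that $\|\V{v}\|_{\infty,1}=\sum_{i\in\set{I}}\max_{j\in\set{J}}v_{(i,j)}$ on the nonnegative orthant, so the problem decomposes as a sum of independent blocks over $i\in\set{I}$. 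Within each block I would solve the inner maximization by fixing $t=\max_j v_{(i,j)}\geq 0$: the optimal choice sets $v_{(i,j)}=t$ wherever $s_{(i,j)}>0$ and $v_{(i,j)}=0$ otherwise, giving per-block value $\sup_{t\geq 0}\,t\big(\|(\V{s}_i)_+\|_1-1\big)$, which is $0$ if $\|(\V{s}_i)_+\|_1\leq 1$ and $+\infty$ otherwise.

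Finally I would assemble the blocks: the total supremum is finite, and equal to $0$, precisely when $\|(\V{s}_i)_+\|_1\leq 1$ for every $i\in\set{I}$, i.e. $\max_{i\in\set{I}}\|(\V{s}_i)_+\|_1=\|(\V{s})_+\|_{1,\infty}\leq 1$. Substituting back $\V{s}=\V{1}+\V{w}$ yields the claimed form of $f_1^*$. The technical care lies entirely in the inner block maximization, where one must check that the constraint $\max_j v_{(i,j)}=t$ is compatible with the proposed maximizer, which holds as long as at least one coordinate with $s_{(i,j)}>0$ is set to $t$ (and if no such coordinate exists the block contributes $0$ via $t=0$); the homogeneity argument then cleanly collapses the whole computation to the single norm inequality.
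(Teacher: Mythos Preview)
Your proposal is correct. The treatment of $f_2^*$ matches the paper's one-line argument. For $f_1^*$, however, you take a genuinely different route from the paper. The paper's proof invokes Lemma~\ref{lemma-dual} (that $\|\cdot\|_{\infty,1}$ and $\|\cdot\|_{1,\infty}$ are dual norms): in the ``$\leq$'' direction it bounds $(\V{1}+\V{w})^{\text{T}}\V{v}\leq\|(\V{1}+\V{w})_+\|_{1,\infty}\|\V{v}\|_{\infty,1}$ via the dual-norm inequality, and in the ``$>$'' direction it pulls a witness $\V{u}$ from the dual-norm characterization and then massages it into a nonnegative vector $\tilde{\V{u}}$ along which the objective blows up. Your argument instead exploits the block-separability of $\|\V{v}\|_{\infty,1}=\sum_{i}\max_{j}v_{(i,j)}$ on the nonnegative orthant to reduce the supremum to independent one-parameter problems $\sup_{t\geq 0}t(\|(\V{s}_i)_+\|_1-1)$, combined with the homogeneity observation that the total sup is either $0$ or $+\infty$.

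Both approaches are sound. Yours is more elementary and self-contained: it never uses Lemma~\ref{lemma-dual} and in fact implicitly re-derives the relevant half of it. The paper's approach is slightly less explicit but situates the computation within the standard conjugate/dual-norm framework, which is natural given how the lemma is subsequently used. Your care about whether $\max_j v_{(i,j)}=t$ is actually attained is well placed and correctly handled (in the unbounded case some $s_{(i,j)}>0$ necessarily exists, so setting that coordinate to $t$ realizes the maximum).
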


\begin{proof}
	By definition of conjugate function we have
$$f_1^*(\V{w}) =\sup_{\V{v}} (\V{w}^{\text{T}}\V{v} - \|\V{v}\|_{\infty,1} +\V{1}^{\text{T}}\V{v}- I_+(\V{v})) = \sup_{\V{v} \succeq 0} ((\V{1}+\V{w})^{\text{T}}\V{v} -  \|\V{v}\|_{\infty,1}).$$
\begin{itemize}
\item If $\|(\V{1}+\V{w})_+\|_{1,\infty} \leq 1$, for each $\V{v} \succeq \V{0}$, $\V{v} \neq \V{0}$ we have
	$$(\V{1}+\V{w})^{\text{T}}\V{v} \leq ((\V{1}+\V{w})_+)^{\text{T}}\V{v} = \|\V{v}\|_{\infty,1} \left(((\V{1}+\V{w})_+)^{\text{T}}\frac{\V{v}}{\|\V{v}\|_{\infty,1}}\right)$$ 
	and by definition of dual norm we get
	$$(\V{1}+\V{w})^{\text{T}}\V{v} \leq \|\V{v}\|_{\infty,1} \|(\V{1}+\V{w})_+\|_{1,\infty} \leq \|\V{v}\|_{\infty,1}$$
	which implies
	$$ (\V{1}+\V{w})^{\text{T}}\V{v} - \|\V{v}\|_{\infty,1}\leq 0.$$
	Moreover, $(\V{1}+\V{w})^{\text{T}}\V{0} - \|\V{0}\|_{\infty,1}= 0$, so we have that $f_1^*(\V{w}) = 0$. 
\item If $\|(\V{1}+\V{w})_+\|_{1,\infty} > 1$, by definition of dual norm and using Lemma~\ref{lemma-dual} there exists $\V{u}$ such that $((\V{1}+\V{w})_+)^{\text{T}}\V{u} > 1$ and $\|\V{u}\|_{\infty,1} \leq 1$. Define $\tilde{\V{u}}$ as
	$$
	\tilde{u}_{(i,j)} =\left\{\begin{array}{cc}
	u_{(i,j)} & \text{ if } u_{(i,j)} \geq 0 \text{ and } 1+w_{(i,j)} \geq 0 \\
	0 & \text{ if } u_{(i,j)} < 0 \text{ or } 1+w_{(i,j)} < 0
	\end{array}\right.
	$$
	By definition of $\tilde{\V{u}}$ and $\|\cdot\|_{\infty,1}$ we have
	$$\|\tilde{\V{u}}\|_{\infty,1} \leq \|\V{u}\|_{\infty,1} \leq 1 $$
	and
	$$(\V{1}+\V{w})^{\text{T}}\tilde{\V{u}} = ((\V{1}+\V{w})_+)^{\text{T}}\tilde{\V{u}}\geq((\V{1}+\V{w})_+)^{\text{T}}\V{u} > 1. $$
	Now let $t > 0$ and take $\V{v} = t \tilde{\V{u}} \succeq 0$, then we have
	$$(\V{1}+\V{w})^{\text{T}}\V{v} - \|\V{v}\|_{\infty,1} = t \left ( (\V{1}+\V{w})^{\text{T}}\tilde{\V{u}} - \|\tilde{\V{u}}\|_{\infty,1} \right ) $$
	which tends to infinity as $t \to + \infty$ because $(\V{1}+\V{w})^{\text{T}}\tilde{\V{u}} - \|\tilde{\V{u}}\|_{\infty,1} > 0$, so we have that $f_1^*(\V{w}) = + \infty$.
	\end{itemize}
	Finally, the expression for $f_2^*$ is straightforward since	$$f_2^*(\V{w})=\sup_{\V{v}\succeq\V{0}}((\V{w}-\V{u})^{\text{T}}\V{v}).$$
\end{proof}

\section{Proof of Theorem~\ref{th1}}\label{proof-th1}
Let set $\widetilde{\set{U}}$ and function $\widetilde{\ell}(\up{h},\up{p})$ be given by
$$\widetilde{\set{U}}=\{\up{p}:\set{X}\times\set{Y}\to\mathbb{R}\mbox{ s.t. }  \V{p}\succeq \V{0},\  \|\V{p}\|_{1,\infty}\leq 1\}$$
$$\widetilde{\ell}(\up{h},\up{p})=\V{b}^{\text{T}}\B{\mu}_{b}^*-\V{a}^{\text{T}}\B{\mu}_{a}^*-\nu^*+\V{p}^{\text{T}}(\B{\Phi}(\B{\mu}_{a}^*-\B{\mu}_{b}^*)+(\nu^*+1)\V{1}-\V{h}).$$
In the first step of the proof we show that $\up{h}^{\V{a},\V{b}}$ satisfying \eqref{robust-act} is a solution of optimization problem $\min_{\up{h}\in T(\set{X},\set{Y})}\max_{\up{p}\in\widetilde{\set{U}}}\widetilde{\ell}(\up{h},\up{p})$, and in the second step of the proof we show that a solution of $\min_{\up{h}\in T(\set{X},\set{Y})}\max_{\up{p}\in\widetilde{\set{U}}}\widetilde{\ell}(\up{h},\up{p})$ is also a solution of $\min_{\up{h}\in T(\set{X},\set{Y})}\max_{\up{p}\in\set{U}^{\V{a},\V{b}}}\ell(\up{h},\up{p})$.

For the first step, note that
$$\widetilde{\ell}(\up{h},\up{p})=\V{b}^{\text{T}}\B{\mu}_{b}^*-\V{a}^{\text{T}}\B{\mu}_{a}^*-\nu^*+\sum_{x\in\set{X}}\V{p}_x^{\text{T}}\left(\B{\Phi}_x(\B{\mu}_{a}^*-\B{\mu}_{b}^*)+(\nu^*+1)\V{1}-\V{h}_x\right).$$


Then, optimization problem $\min_{\up{h}\in T(\set{X},\set{Y})}\max_{\up{p}\in\widetilde{\set{U}}}\widetilde{\ell}(\up{h},\up{p})$ is equivalent to
$$\begin{array}{ccc}\min &\max &  \sum_{x\in\set{X}}\V{p}_x^{\text{T}}\left(\B{\Phi}_x(\B{\mu}_{a}^*-\B{\mu}_{b}^*)+(\nu^*+1)\V{1}-\V{h}_x\right)\\
\up{h}_x\in\Delta(\set{Y})\  \forall x\in\set{X}&\V{p}_x\succeq \V{0}, \|\V{p}_x\|_1\leq 1 \forall x\in\set{X}&\end{array}$$



that is separable and has solution given by
$$\begin{array}{cccc}\up{h}_x^{\V{a},\V{b}}\in&\arg\min &\max &\V{p}_x^{\text{T}}\left(\B{\Phi}_x(\B{\mu}_{a}^*-\B{\mu}_{b}^*)+(\nu^*+1)\V{1}-\V{h}_x\right)\\&
\up{h}_x\in\Delta(\set{Y})&\V{p}_x\succeq \V{0}, \|\V{p}_x\|_1\leq 1&\end{array}$$
for each $x\in\set{X}$. The inner maximization above is given in closed-form by
$$\begin{array}{cc}&\underset{\V{p}_x\succeq \V{0}, \|\V{p}_x\|_1\leq 1}{\max} \V{p}_x^{\text{T}}\left(\B{\Phi}_x(\B{\mu}_{a}^*-\B{\mu}_{b}^*)+(\nu^*+1)\V{1}-\V{h}_x\right)\vspace{0.2cm}\\
&\hspace{0.85cm}=\|\left(\B{\Phi}_x(\B{\mu}_{a}^*-\B{\mu}_{b}^*)+(\nu^*+1)\V{1}-\V{h}_x\right)_+\|_\infty\geq 0\end{array}$$
that takes its minimum value $0$ for any $\V{h}_x^{\V{a},\V{b}}\succeq \B{\Phi}_x(\B{\mu}_{a}^*-\B{\mu}_{b}^*)+(\nu^*+1)\V{1}$.



For the second step, if $\up{h}^{\V{a},\V{b}}$ is a solution of $\min_{\up{h}\in T(\set{X},\set{Y})}\max_{\up{p}\in\widetilde{\set{U}}}\widetilde{\ell}(\up{h},\up{p})$ we have that
\begin{align}\label{ineqs}\min_{\up{h}\in T(\set{X},\set{Y})}\max_{\up{p}\in\widetilde{\set{U}}}\widetilde{\ell}(\up{h},\up{p})=\max_{\up{p}\in\widetilde{\set{U}}}\widetilde{\ell}(\up{h}^{\V{a},\V{b}},\up{p})\geq\max_{\up{p}\in\set{U}^{\V{a},\V{b}}}\ell(\up{h}^{\V{a},\V{b}},\up{p})\geq\min_{\up{h}\in T(\set{X},\set{Y})}\max_{\up{p}\in\set{U}^{\V{a},\V{b}}}\ell(\up{h},\up{p})\end{align}
where the first inequality is due to the fact that $\set{U}^{\V{a},\V{b}}\subset\widetilde{\set{U}}$ and $\widetilde{\ell}(\up{h},\up{p})\geq\ell(\up{h},\up{p})$ for 
$\up{p}\in\set{U}^{\V{a},\V{b}}$ because 
$$\V{b}^{\text{T}}\B{\mu}_{b}^*-\V{a}^{\text{T}}\B{\mu}_{a}^*+\V{p}^{\text{T}}\B{\Phi}(\B{\mu}_{a}^*-\B{\mu}_{b}^*)\leq 0$$ by definition of $\set{U}^{\V{a},\V{b}}$ and since $\B{\mu}_{a}^*,\B{\mu}_{b}^*\succeq \V{0}$.


Since $\ell(\up{h},\up{p})$ is continuous and convex-concave, and both $\set{U}^{\V{a},\V{b}}$ and $ T(\set{X},\set{Y})$ 
are convex and compact, the min and the max in $R^{\V{a},\V{b}}=\min_{\up{h}\in T(\set{X},\set{Y})}\max_{\up{p}\in\set{U}^{\V{a},\V{b}}}\ell(\up{h},\up{p})$ can be interchanged (see e.g., \cite{GruDaw:04}) and we have that  $R^{\V{a},\V{b}}=\max_{\up{p}\in\set{U}^{\V{a},\V{b}}}\min_{\up{h}\in T(\set{X},\set{Y})}\ell(\up{h},\up{p})$. In addition,
$$\min_{\up{h}\in T(\set{X},\set{Y})}\ell(\up{h},\up{p})=\min_{\up{h}\in T(\set{X},\set{Y})}\V{p}^{\text{T}}(\V{1}-\V{h})=\V{p}^{\text{T}}\V{1}-\|\V{p}\|_{\infty,1}$$
because the optimization problem above is separable for $x\in\set{X}$ and
\begin{align}\label{entropy}\max_{\up{h}_x\in\Delta(\set{Y})}\V{p}_x^{\text{T}}\V{h}_x=\|\V{p}_x\|_\infty.\end{align}

Then $R^{\V{a},\V{b}}=\max_{\up{p}\in\set{U}^{\V{a},\V{b}}}\V{p}^{\text{T}}\V{1}-\|\V{p}\|_{\infty,1}$ that can be written as
\begin{align}\begin{array}{cc}\underset{\V{p}}{\max}&\V{p}^{\text{T}}\V{1}-\|\V{p}\|_{\infty,1}-I_+(\V{p})\\
\mbox{s. t.} &-\V{p}^{\text{T}}\V{1}=-1\\&
\V{a}\preceq\B{\Phi}^{\text{T}}\V{p}\preceq\V{b}\end{array}\label{opt}\end{align}
where 
$$I_+(\V{p})=\left\{\begin{array}{cc}0 &\mbox{if}\  \V{p}\succeq \V{0}\\\infty&\mbox{otherwise}\end{array}\right.$$

The Lagrange dual of the optimization problem \eqref{opt} is
\begin{align}\label{dual}\begin{array}{cc}\min&\V{b}^{\text{T}}\B{\mu}_{b}-\V{a}^{\text{T}}\B{\mu}_{a}-\nu+f^*\left(\B{\Phi}(\B{\mu}_{a}-\B{\mu}_{b})+\nu\V{1}\right)\\\B{\mu}_{a},\B{\mu}_{b}\in\mathbb{R}^{m},\nu\in\mathbb{R}&\\\mbox{s.t.}&
\B{\mu}_{a}\succeq \V{0}, \B{\mu}_{b}\succeq \V{0}\end{array}\end{align}
where $f^*$ is the conjugate function of $f(\V{p})=\|\V{p}\|_{\infty,1}-\V{p}^{\text{T}}\V{1}+I_+(\V{p})$ (see e.g., section 5.1.6 in \cite{BoyVan:04}). Then, optimization problem \eqref{dual} becomes \eqref{learning-ineq} using the Lemma~\ref{lemma-conjugate} above.

Strong duality holds between optimization problems \eqref{opt} and \eqref{learning-ineq} since constraints in \eqref{opt} are affine. Then, if $\B{\mu}_{a}^*,\B{\mu}_{b}^*,\nu^*$ is a solution of \eqref{learning-ineq} we have that $R^{\V{a},\V{b}}$ is equal to the value of
\begin{align}\label{opt2}\max_{\up{p}}\V{p}^{\text{T}}\V{1}-\|\V{p}\|_{\infty,1}-I_+(\V{p})-(\V{p}^{\text{T}}\B{\Phi}-\V{b}^{\text{T}})\B{\mu}_{b}^*+(\V{p}^{\text{T}}\B{\Phi}-\V{a}^{\text{T}})\B{\mu}_{a}^*+(\V{p}^{\text{T}}\V{1}-1)\nu^*\end{align} that equals
$$\max_{\up{p}\in\widetilde{\set{U}}}\V{p}^{\text{T}}\V{1}-\|\V{p}\|_{\infty,1}+\V{b}^{\text{T}}\B{\mu}_{b}^*-\V{a}^{\text{T}}\B{\mu}_{a}^*-\nu^*+\V{p}^{\text{T}}\left(\B{\Phi}(\B{\mu}_{a}^*-\B{\mu}_{b}^*)+\nu^*\V{1}\right)$$
since a solution of the primal problem \eqref{opt} belongs to $\widetilde{\set{U}}$ and is also a solution of \eqref{opt2}. Therefore, 
\begin{align*}R^{\V{a},\V{b}}&=\max_{\up{p}\in\widetilde{\set{U}}}\min_{\up{h}\in T(\set{X},\set{Y})}\ell(\up{h},\up{p})+\V{b}^{\text{T}}\B{\mu}_{\B{b}}^*-\V{a}^{\text{T}}\B{\mu}_{a}^*-\nu^*+\V{p}^{\text{T}}\left(\B{\Phi}(\B{\mu}_{a}^*-\B{\mu}_{b}^*)+\nu^*\V{1}\right)\\&=\max_{\up{p}\in\widetilde{\set{U}}}\min_{\up{h}\in T(\set{X},\set{Y})}\widetilde{\ell}(\up{h},\up{p})=\min_{\up{h}\in T(\set{X},\set{Y})}\max_{\up{p}\in\widetilde{\set{U}}}\widetilde{\ell}(\up{h},\up{p})\end{align*}
where the last equality is due to the fact that $\widetilde{\ell}(\up{h},\up{p})$ is continuous and convex-concave, and both $\widetilde{\set{U}}$ and $T(\set{X},\set{Y})$ are convex and compact. Then, inequalities in \eqref{ineqs} are in fact equalities and $\up{h}^{\V{a},\V{b}}$ is solution of $\min_{\up{h}\in T(\set{X},\set{Y})}\max_{\up{p}\in\set{U}^{\V{a},\V{b}}}\ell(\up{h},\up{p})$.


\section{Proof of Theorem~\ref{prop}}\label{proof-prop}
The result is a direct consequence of the fact that for any $\up{p}\in\set{U}^{\V{a},\V{b}}$
$$\min_{\widetilde{\up{p}}\in\set{U}^{\V{a},\V{b}}}\ell(\up{h},\widetilde{\up{p}})\leq\ell(\up{h},\up{p})\leq\max_{\widetilde{\up{p}}\in\set{U}^{\V{a},\V{b}}}\ell(\up{h},\widetilde{\up{p}})$$
and 
$$\min_{\widetilde{\up{p}}\in\set{U}^{\V{a},\V{b}}}\ell(\up{h},\widetilde{\up{p}})=\min_{\widetilde{\up{p}}\in\set{U}^{\V{a},\V{b}}}\V{\widetilde{p}}^{\text{T}}(\V{1}-\V{h})$$
$$\max_{\widetilde{\up{p}}\in\set{U}^{\V{a},\V{b}}}\ell(\up{h},\widetilde{\up{p}})=-\min_{\widetilde{\up{p}}\in\set{U}^{\V{a},\V{b}}}\V{\widetilde{p}}^{\text{T}}(\V{h}-\V{1}).$$

The expression for $\kappa^{\V{a},\V{b}}(q)$ in \eqref{lower} is obtained since
\begin{align}\label{opt_kappa}\begin{array}{ccc}\underset{\widetilde{\up{p}}\in\set{U}^{\V{a},\V{b}}}{\min}\V{\widetilde{p}}^{\text{T}}(-\V{q})=&\underset{\widetilde{\V{p}}}\min&\V{\widetilde{p}}^{\text{T}}(-\V{q})+I_+(\V{\widetilde{p}})\\&
\mbox{s. t.} &-\V{1}^{\text{T}}\V{\widetilde{p}}=-1\\&&
\V{a}\preceq\B{\Phi}^{\text{T}}\V{\widetilde{p}}\preceq\V{b}\end{array}\end{align}
where 
$$I_+(\V{\widetilde{p}})=\left\{\begin{array}{cc}0 &\mbox{if}\  \V{\widetilde{p}}\succeq \V{0}\\\infty&\mbox{otherwise}\end{array}\right.$$
Then, the Lagrange dual of the optimization problem \eqref{opt_kappa} is
\begin{align}\begin{array}{cc}\max&\V{a}^{\text{T}}\B{\mu}_{a}-\V{b}^{\text{T}}\B{\mu}_{b}+\nu-f^*\left(\B{\Phi}(\B{\mu}_{a}-\B{\mu}_{b})+\nu\V{1}\right)\\\B{\mu}_{a},\B{\mu}_{b}\in\mathbb{R}^{m},\nu\in\mathbb{R}&\\\mbox{s.t.}&
\B{\mu}_{a}\succeq \V{0}, \B{\mu}_{b}\succeq \V{0}\end{array}\end{align}
where $f^*$ is the conjugate function of $f(\V{\widetilde{p}})=\V{\widetilde{p}}^{\text{T}}(-\V{q})+I^+(\V{\widetilde{p}})$ 
that leads to \eqref{lower} using Lemma~\ref{lemma-conjugate}.

\section{Proof of Theorem~\ref{th-bounds}}\label{proof-bounds}
Firstly, with probability at least $1-\delta$ we have that $\up{p}^*\in\set{U}^{\V{a}_n,\V{b}_n}$ and 
$$\|\B{\tau}_{\infty}-\B{\tau}_{n}\|_2\leq \|\V{d}\|_2\sqrt{\frac{\log m+\log\frac{2}{\delta}}{2n}}$$
because, using Hoeffding's inequality \cite{BouLugMas:13} we have that for $i=1,2,\ldots,m$
$$\mathbb{P}\left\{|\tau_{\infty,i}-\tau_{n,i}|< t_i\right\}\geq1-2\exp\left\{-\frac{2n^2t_i^2}{nd_i^2}\right\}$$
so taking $t_i=d_i\sqrt{\frac{\log m+\log\frac{2}{\delta}}{2n}}$ we get
$$\mathbb{P}\left\{|\tau_{\infty,i}-\tau_{n,i}|< d_i\sqrt{\frac{\log m+\log\frac{2}{\delta}}{2n}}\right\}\geq1-2\exp\left\{-\log m-\log\frac{2}{\delta}\right\}= 1-\frac{\delta}{m}$$
and using the union bound we have that 
\begin{align*}\mathbb{P}\Bigg\{|\tau_{\infty,i}-\tau_{n,i}|< d_i\sqrt{\frac{\log m+\log\frac{2}{\delta}}{2n}},\  &i=1,2,\ldots,m\Bigg\}
\\
&\geq 1-m+\sum_{i=1}^m\mathbb{P}\left\{|\tau_{\infty,i}-\tau_{n,i}|< d_i\sqrt{\frac{\log m+\log\frac{2}{\delta}}{2n}}\right\}\\&\geq 1-\delta.
\end{align*}


For the first inequality in \eqref{bound1}, we have that $R(\up{h}^{\V{a}_n,\V{b}_n})\leq R^{\V{a}_n,\V{b}_n}$ with probability at least $1-\delta$ since $\up{p}^*\in\set{U}^{\V{a}_n,\V{b}_n}$ with probability at least $1-\delta$. 

For the second inequality in \eqref{bound1}, let $\B{\mu}^*,\nu^*$ be the solution with minimum euclidean norm of \eqref{learning-eq} for $\V{a}=\B{\tau}_\infty$; $\left[(\B{\mu}^*)^+,(-\B{\mu}^*)^+,\nu^*\right]$ is a feasible point of \eqref{learning-ineq} because $\B{\mu}^*=(\B{\mu}^*)^+-(-\B{\mu}^*)^+$ and $\B{\mu}^*,\nu^*$ is a feasible point of \eqref{learning-eq}. Hence
$$R^{\V{a}_n,\V{b}_n}\leq \V{b}_n^{\text{T}}(-\B{\mu}^*)^+-\V{a}_n^{\text{T}}(\B{\mu}^*)^+-\nu^*=R^{\B{\tau}_{\infty}}+(\V{b}_n-\B{\tau}_{\infty})^{\text{T}}(-\B{\mu}^*)^++(\B{\tau}_{\infty}-\V{a}_n)^{\text{T}}(\B{\mu}^*)^+$$
$$=R^{\B{\tau}_{\infty}}-\left(\B{\tau}_{\infty}-\B{\tau}_n-\V{d}\sqrt{\frac{\log m+\log\frac{2}{\delta}}{2n}}\right)^{\text{T}}(-\B{\mu}^*)^++\left(\B{\tau}_{\infty}-\B{\tau}_n+\V{d}\sqrt{\frac{\log m+\log\frac{2}{\delta}}{2n}}\right)^{\text{T}}(\B{\mu}^*)^+$$
$$=R^{\B{\tau}_{\infty}}+(\B{\tau}_n-\B{\tau}_{\infty})^{\text{T}}\B{\mu}^*+\sqrt{\frac{\log m+\log\frac{2}{\delta}}{2n}}\V{d}^{\text{T}}((\B{\mu}^*)^++(-\B{\mu}^*)^+)$$
Then the result is obtained using Cauchy-Schwarz inequality and the fact that $\|(\B{\mu}^*)^++(-\B{\mu}^*)^+\|_2=\|\B{\mu}^*\|_2$. 


For the result in \eqref{bound2}, note that using Theorem~\ref{prop} and since $\up{p}^*\in\set{U}^{\V{a}_n,\V{b}_n}$ with probability at least $1-\delta$ we have that 
$$R(\up{h}^{\B{\tau}_n})\leq\max_{\up{p}\in\set{U}^{\V{a}_n,\V{b}_n}}\ell(\up{h}^{\B{\tau}_n},\up{p})=\underset{\B{\Phi}(\B{\mu}_{a}-\B{\mu}_{a})+\nu\V{1}\preceq \V{h}^{\B{\tau}_n}-\V{1}}{\min}\V{b}_n^{\text{T}}\B{\mu}_{b}-\V{a}_n^{\text{T}}\B{\mu}_{a}-\nu$$
so that, if $\B{\mu}_n^*,\nu_n^*$ is the solution with minimum euclidean norm of \eqref{learning-eq} for $\V{a}=\B{\tau}_n$, we have that $R(\up{h}^{\B{\tau}_n})\leq \V{b}_n^{\text{T}}(-\B{\mu}_n^*)^+-\V{a}_n^{\text{T}}(\B{\mu}_n^*)^+-\nu_n^*$ because $\B{\mu}_n^*=(\B{\mu}_n^*)^+-(-\B{\mu}_n^*)^+$ and $\B{\Phi}\B{\mu}_n^*+\nu_n^*\V{1}\preceq\V{h}^{\tau_n}-\V{1}$ by definition of $\V{h}^{\tau_n}$. Therefore, the result is obtained since 
\begin{align*}R(\up{h}^{\B{\tau}_n})\leq\,& \left(\B{\tau}_n+\V{d}\sqrt{\frac{\log m+\log\frac{2}{\delta}}{2n}}\right)^{\text{T}}(-\B{\mu}_n^*)^+-\left(\B{\tau}_n-\V{d}\sqrt{\frac{\log m+\log\frac{2}{\delta}}{2n}}\right)^{\text{T}}(\B{\mu}_n^*)^+-\nu_n^*\\
=\,& R^{\B{\tau}_n}+\V{d}^{\text{T}}\sqrt{\frac{\log m+\log\frac{2}{\delta}}{2n}}\left((\B{\mu}_n^*)^++(-\B{\mu}_n^*)^+\right).
\end{align*}


For the result in \eqref{bound4}, note that using Theorem~\ref{prop} and since $\up{p}^*\in\set{U}^{\B{\tau}_{\infty}}$ we have that
$$R(\up{h}^{\B{\tau}_n})\leq\max_{\up{p}\in\set{U}^{\B{\tau}_{\infty}}}\ell(\up{h}^{\B{\tau}_n},\up{p})=\underset{\B{\Phi}\B{\mu}+\nu\V{1}\preceq \V{h}^{\B{\tau}_n}-\V{1}}{\min}-(\B{\tau}_{\infty})^{\text{T}}\B{\mu}-\nu$$
so that, if $\B{\mu}_n^*,\nu_n^*$ is the solution with minimum euclidean norm of \eqref{learning-eq} for $\V{a}=\B{\tau}_n$, we have that $R(\up{h}^{\B{\tau}_n})\leq -(\B{\tau}_{\infty})^{\text{T}}\B{\mu}_n^*-\nu_n^*$ because $\B{\Phi}\B{\mu}_n^*+\nu_n^*\V{1}\preceq \V{h}^{\tau_n}-\V{1}$ by definition of $\V{h}^{\tau_n}$.  Let  $\B{\mu}^*,\nu^*$ be the solution with minimum euclidean norm of \eqref{learning-eq} for $\V{a}=\B{\tau}_{\infty}$, the result is obtained since
\begin{align}R(\up{h}^{\B{\tau}_n})&\leq -(\B{\tau}_{\infty})^{\text{T}}\B{\mu}_n^*
-\nu_n^*+\B{\tau}_n^{\text{T}}\B{\mu}_n^*-\B{\tau}_n^{\text{T}}\B{\mu}_n^*+(\B{\tau}_{\infty})^{\text{T}}\B{\mu}^*+\nu^*-(\B{\tau}_{\infty})^{\text{T}}\B{\mu}^*-\nu^*\nonumber\\
&=(\B{\tau}_n-\B{\tau}_{\infty})^{\text{T}}\B{\mu}_n^*+R^{\B{\tau}_{\infty}}-\B{\tau}_n^{\text{T}}\B{\mu}_n^*-\nu_n^*+(\B{\tau}_{\infty})^{\text{T}}\B{\mu}^*+\nu^*\nonumber\\
&\leq (\B{\tau}_n-\B{\tau}_{\infty})^{\text{T}}\B{\mu}_n^*+(\B{\tau}_{\infty}-\B{\tau}_n)^{\text{T}}\B{\mu}^*+R^{\B{\tau}_{\infty}}\label{ineq1}\\
&\leq\|\B{\tau}_n-\B{\tau}_{\infty}\|_2\|\B{\mu}_n^*-\B{\mu}^*\|_2+R^{\B{\tau}_{\infty}}\nonumber
\end{align}
where \eqref{ineq1} is due to the fact that $-\B{\tau}_n^{\text{T}}\B{\mu}_n^*-\nu_n^*\leq-\B{\tau}_n^{\text{T}}\B{\mu}^*-\nu^*$ since $\B{\mu}^*,\nu^*$ is a feasible point of \eqref{learning-eq} for $\V{a}=\B{\tau}_n$.

\end{document}